\documentclass[10pt,letterpaper]{article}

\usepackage[top=1in,bottom=1in,left=1in,right=1in]{geometry}

\usepackage[utf8]{inputenc} 
\usepackage[T1]{fontenc}    
\usepackage{hyperref}       
\usepackage{url}            
\usepackage{booktabs}       
\usepackage{amsfonts}       
\usepackage{nicefrac}       
\usepackage{microtype}      
\usepackage{xcolor}         

\usepackage{amsmath}
\usepackage{amssymb}
\usepackage{mathtools}
\usepackage{amsthm}
\usepackage{color}
\usepackage{algorithm}
\usepackage{algorithmic}

\theoremstyle{plain}
\newtheorem{theorem}{Theorem}[section]
\newtheorem{proposition}[theorem]{Proposition}

\newtheorem{assumption}[theorem]{Assumption}
\newtheorem{remark}[theorem]{Remark}
\newtheorem{claim}[theorem]{Claim}

\newcommand{\argmin}{\mathop{\mathrm{argmin}}}
\DeclareMathOperator*{\minimize}{minimize}
\newcommand{\subjectto}{{\rm subject\ to}}
\newcommand{\E}{\mathbb{E}}
\newcommand{\I}{\mathbf{I}}
\newcommand{\zero}{\mathbf{0}}
\newcommand{\tr}{\operatorname{tr}}
\renewcommand{\vec}{\mathbf{vec}}
\newcommand{\diag}{\mathbf{diag}}
\newcommand{\Diag}{\mathbf{Diag}}
\newcommand{\supp}{\mathrm{supp}}
\newcommand{\R}{\mathbb{R}}
\newcommand{\A}{\mathbf{A}}
\newcommand{\B}{\mathbf{B}}
\newcommand{\C}{\mathbf{C}}
\newcommand{\D}{\mathbf{D}}
\newcommand{\J}{\mathbf{J}}
\newcommand{\X}{\mathbf{X}}
\renewcommand{\S}{\mathbf{S}}
\renewcommand{\L}{\mathbf{L}}
\newcommand{\N}{\mathbf{N}}
\newcommand{\x}{\mathbf{x}}
\newcommand{\z}{\mathbf{z}}
\newcommand{\noise}{\boldsymbol{\varepsilon}}
\newcommand{\cC}{\mathcal{C}}
\newcommand{\F}{\mathcal{F}} 
\newcommand{\cL}{\mathcal{L}}
\newcommand{\M}{\mathcal{M}}
\newcommand{\cP}{\mathcal{P}}
\newcommand{\cR}{\mathcal{R}}
\newcommand{\T}{\mathcal{T}}
\newcommand{\cS}{\mathcal{S}}
\newcommand{\bOmega}{\mathbf{\Omega}}
\newcommand{\bSigma}{\mathbf{\Sigma}}
\newcommand{\bDelta}{\mathbf{\Delta}}
\newcommand{\full}{f}

\title{\textbf{Provable Guarantees and Efficient Learning of Structural Equation Models with Latent Confounders}}

\author{Weijian Yu\\
CIS, The University of Melbourne\\
\texttt{weijian.yu@student.unimelb.edu.au}\\
\and
Jean Honorio\\
CIS, The University of Melbourne\\
\texttt{jean.honorio@unimelb.edu.au}}

\date{}

\begin{document}

\maketitle

\begin{abstract}
Causal discovery aims to recover causal relationships from observed data.
In various fields, exploring causal relationships among variables remains an important topic, but this task becomes challenging due to the existence of latent confounders.
Ignoring such confounders can lead to false associations and incorrect edge directions.
In this paper, we study the linear structural equation model with latent confounders.
We propose an algorithm that iteratively identifies terminal (observed) nodes and reconstructs the directed acyclic graph of the observed variables.
To do this, we recover the precision matrix  of the observed variables as a sparse plus low-rank matrix: a sparse matrix captures the conditional dependencies among observed variables, while a low-rank matrix captures the combined influence of a few latent confounders.
We establish that for $p$ observed variables, $r$ latent confounders and $s$ edges, our procedure correctly identifies the directed causal relationship among observed variables, for $n \gtrsim \max\{s\log p,\ r p\}$ samples.
Experimental results validate our theoretical contributions.
\end{abstract}

\section{Introduction}

Causal discovery allows to infer causal relationships from observed data.
This is crucial for understanding complex systems such as genetics and finance, especially when control experiments are not feasible, costly or unethical.
A frequently adopted simplification is to assume causal sufficiency, i.e., there are no hidden confounding factors, and thus any association is fully determined by the other observed variables.
Real data rarely meets this requirement.
For instance, a psychological questionnaire reflects hidden psychological factors.
In vision and language, pixels and symbols are driven by underlying semantic factors.
These latent variables can induce interdependencies, masquerading as causal relationships.

Our goal is to discover directed causal relationships between observed variables, in the presence of latent confounders.
In this paper, we focus on the parameterization of the precision matrix (encoding undirected causal relationships) and the identification of directed causal relationships among observed variables.
The closest to our goal is fast causal inference~\cite{spirtes2000causation,colombo_learning_2012}, which does not impose any particular assumptions on latent variables, but outputs less information (i.e., a partial ancestral graph) and can potentially make an exponential number of conditional independence tests.
The case of undirected causal relationships between observed variables in the presence of latent confounders was previously studied by~\cite{chandrasekaran2010latent,meng2014learning}.
For directed causal relationships, while the fully observed case has been largely studied~\cite{ghoshal2018learning,zheng2018,ng2020} and we cannot do justice to the large body of work, there is a lack of methods that work in the presence of latent confounders.

In the literature, there are problems that include latent variables in directed causal discovery, but that do not relate to our goal.
\cite{Silva2003-SILLMM,huang2022latent,ng2024score} propose algorithms that can effectively learn directed causal relationships between latent variables and observed variables, as well as directed causal relationships among latent variables.
However, their methods do not focus on directed causal relationships between observed variables.
Another line work focuses on grouping observed variables that are likely affected by the same latent variable~\cite{silva06a,xie22a}.
Finally, a last line of work assumes that such latent-to-observed variable grouping is known~\cite{cui2018}.

\paragraph{Contributions.}

We develop a linear structural equation model (SEM) with latent confounders, which can effectively identify the causal relationships between observed variables.
In our model, the precision matrix follows a sparse plus low-rank decomposition.
We use a regularized maximum likelihood estimation (MLE) approach~\cite{chandrasekaran2010latent} to estimate the precision matrix (encoding undirected causal relationships).
Inspired by~\cite{yang2013dirty,meng2014learning}, under the conditions of Restricted Strong Convexity and Structural Incoherence, we derive error bounds for the regularized MLE.
Then we use the precision matrix to identify the directed causal relationships among the observed variables, using an algorithm similar to~\cite{ghoshal2018learning}.
We then provide provable recovery guarantees for our approach.
More specifically, we show that if the number of samples $n$ fulfills $n \gtrsim \frac{1}{\epsilon^2}\max\left\{s\log p,\ r p\right\}$ for $p$ observed variables, $r$ latent confounders and $s$ edges, then our method correctly recovers the directed causal edges.

\section{Preliminaries}

In this section, we first define some notations.
We then explain our main object of analysis: linear structural equation models (SEMs) with latent confounders.
We then present the connection between SEMs and undirected graphical models, which is important for our algorithm and guarantees.

\subsection{Notations}

We use bold lowercase to represent vectors, and bold uppercase to represent matrices.
We write $[p] := \{1,\ldots,p\}$.
We write the set $-i := [p]\setminus\{i\}$.
For a matrix $\A$, the trace is denotes as $\tr(\A)$ and $\vec(\A)$ denotes the vectorization of the matrix. 
The support of matrix $\A$ is denoted as $\supp(\A)=\{(i,j)\in[p]\times[p]\mid \A_{i,j}\neq0\}$.
$\diag(\A)$ extracts the main diagonal of $\A$ and $\Diag(\mathbf v)$ is the diagonal matrix with $\mathbf v$ on its diagonal.
We first introduce some notation.
For index sets $I,J\subseteq[p]$, $\A_{I,J}\in\R^{|I|\times|J|}$ is the sub-matrix of $\A\in\R^{p\times p}$ with rows in $I$ and columns in $J$; in this context, the symbol $\bullet$ denotes all rows (or columns).
For a matrix $\A$, $\|\A\|_{1}$ and $\|\A\|_{\infty}$ denote its entrywise $\ell_{1}$ and $\ell_{\infty}$ norms, respectively.
$\|\A\|_{2}$, $\|\A\|_{F}$, $\|\A\|_{*}$ denote the spectral, Frobenius and nuclear of matrix $\A$, respectively.

A directed graph of $p$ nodes is denoted as $G=([p],E)$ with edges $E\subseteq[p]\times[p]$, where $(i,j)\in E$ represents a directed edge from $j$ to $i$.
A directed acyclic graph (DAG) is a directed graph without cycles (i.e., starting from any node and following the direction of the edges, one cannot return to the original node).
For node $i$, $\pi_G(i)$ denote the sets of parents and $\phi_G(i)$ denote the sets of children in graph $G$.
We call node $i$ terminal if $\phi_G(i)=\varnothing$.
Let $\T_G$ be a set of topological orderings over $[p]$ in $G$.
$\T_G=\{\tau\in \cS_p | \tau(j)<\tau(i)\ \text{if}\ (i,j)\in E\}$, where $\cS_p$ is the set of all possible permutations of $[p]$.
For any topological order $\tau\in\T_G$
and any $m\in[p]$, define the sequence of graphs $G[m,\tau] = \bigl(V[m,\tau],\,E[m,\tau]\bigr)$
where $G[m,\tau]$ is the induced subgraph of $G$ on the first $m$ nodes in
the ordering $\tau$, i.e., $V[m,\tau] =  \{\, i\in[p]\;|\; \tau(i)\le m \,\}$ and 
  $E[m,\tau]=\{\, (i,j)\in E \;|\; i\in V[m,\tau],\, j\in V[m,\tau] \,\}$.
Equivalently, $G[m,\tau]$ contains exactly the first $m$ nodes under $\tau$
and all the edges among them.

\subsection{A Linear SEM with Latent Confounders}

Let $p$ be the number of observed variables and $r<p$ be the number of latent confounders.
Let $\B \in \R^{p \times p}$ encode the causal effects between observed variables, $\C \in \R^{p \times r}$ encode the causal effects from latent confounders to observed variables, and $\D \in \R^{r \times r}$ encode the causal effects among latent confounders.
In other words, the edges in the DAG is defined by the support of the matrices $\B$, $\C$ and $\D$.

In a linear SEM, the random observed vector $\x \in \R^p$ and the latent vector $\z \in \R^r$ can be written as the linear combination
$\x = \B \, \x + \C \, \z + \noise_X$ and
$\z = \D \, \z + \noise_Z$,
or equivalently,
\begin{equation} \label{eq:sem}
\begin{bmatrix} \x \\ \z \end{bmatrix}
=
\begin{bmatrix}
\B & \C \\
\zero & \D
\end{bmatrix}
\begin{bmatrix} \x \\ \z \end{bmatrix}
+
\begin{bmatrix} \noise_X \\ \noise_Z \end{bmatrix},
\end{equation}
where the noise variables fulfill $\E[\noise_X] = \zero$, $\E[\noise_Z] = \zero$ and $\E\left[\begin{bmatrix} \noise_X \\ \noise_Z \end{bmatrix} \begin{bmatrix} \noise_X \\ \noise_Z \end{bmatrix}^\top\right] = \Diag\!\left(\{\sigma_i^2\}_{i=1}^{p+r}\right)$.
We denote the SEM over the observed variables as $(G,\B,\{\sigma_i^2\}_{i=1}^p)$ where $G=([p],E)$ and $E=\supp(\B)$.
We denote the SEM over observed variables and latent confounders as $\left(G_{\full},\begin{bmatrix}
\B & \C \\
\zero & \D
\end{bmatrix},\{\sigma_i^2\}_{i=1}^{p+r}\right)$ where $G_\full=([p+r],E_\full)$ and $E_\full=\supp\left(\begin{bmatrix}
\B & \C \\
\zero & \D
\end{bmatrix}\right)$.

We assume we have $n$ samples from the true distribution, but which access to only $p$ observed variables ($\x$).
That is, we do not have access to $r$ latent confounders ($\z$).
Our goal is to recover $\B$ (and its support) by using only observed data.
Formally speaking, we receive a data matrix \(\X\in \R^{n\times p}\) of observed variables, which comes from \(\left(G_{\full}, \begin{bmatrix}
\B^* & \C^* \\
\zero & \D^*
\end{bmatrix}, \{\sigma_i^2\}_{i=1}^{p+r}\right)\),
and we want to recover a SEM \((\widehat{G}, \widehat{\B}, \{\widehat{\sigma}_i^{2}\}_{i=1}^p)\) such that \(G^*=\widehat{G}\) or equivalently, such that $\supp(\widehat{\B})=\supp(\B^*)$.

\subsection{SEMs and Undirected Graphical Models with Latent Confounders}

While linear SEMs are directed graphical models, we can also view them as undirected graphical models.
We leverage this connection later in Section~\ref{sec:algoandguarantees} to motivate a new algorithm for learning SEMs from data, with provable guarantees.
Define the (full) covariance matrix as follows:
\begin{equation} \label{eq:Sigmafull}
\bSigma_{\full} := \E\left[\begin{bmatrix} \x \\ \z \end{bmatrix} \begin{bmatrix} \x \\ \z \end{bmatrix}^\top\right] := \left[\begin{matrix} \bSigma & \bSigma_{X,Z} \\ \bSigma_{X,Z}^\top & \bSigma_{Z,Z}  \end{matrix}\right],
\end{equation}
where $\bSigma := \E\left[\x\x^\top\right] \in \R^{p \times p}$, $\bSigma_{X,Z} \in \R^{p \times r}$ and ${\bSigma_{Z,Z} \in \R^{r \times r}}$.
Let $\J_\full$ be the precision matrix related to the (full) covariance matrix $\bSigma_{\full}$, i.e.,
\begin{equation} \label{eq:Jfull}
\J_\full := \bSigma_{\full}^{-1} := \left[\begin{matrix} \S & \J_{X,Z} \\ \J_{X,Z}^\top & \J_{Z,Z}  \end{matrix}\right],
\end{equation}
where $\S \in \R^{p \times p}$, $\J_{X,Z} \in \R^{p \times r}$ and $\J_{Z,Z} \in \R^{r \times r}$.
Let $\bOmega\in\R^{p\times p}$ be the marginal precision matrix related to $\bSigma = \E\left[\x\x^\top\right]$, i.e., $\bOmega := \bSigma^{-1}$.
It can be shown that (see Appendix~\ref{app:covarianceprecision} for the derivations),
\begin{align} \label{eq:OmegaSL}
\bOmega 
= \S + \L, 
\qquad
\S 
= (\I-\B)^{\top} \N_X^{-1}(\I-\B), 
\qquad
\L 
= -\J_{X,Z}\,\J_{Z,Z}^{-1}\,\J_{X,Z}^\top .
\end{align}
where $\N_X = \Diag\!\left(\{\sigma_i^2\}_{i=1}^p\right)$.
First, note that $\S$ captures the condition dependencies among observed variables, while $\L$ captures the commbined influence of latent confounders.
For this reason, we call $\S$ the confounder-free precision matrix.
Moreover, $\S$ is sparse and $\L$ is a matrix of rank at most $r$ (since $\J_{Z,Z} \in \R^{r \times r}$).
Thus, the marginal precision matrix $\bOmega$ can be written as the sum of a sparse component and a low-rank component.

\begin{remark}
From eq.\eqref{eq:OmegaSL}, one can observe that successful recovery of $\S$ from training data, would imply successful recovery of $\B$.
This motivates our algorithm and the study of its theoretical guarantees.
\end{remark}

Given $n$ samples for the observed variables only, i.e., $\X \in \R^{n \times p}$, we define the observed sample covariance matrix as $\widehat{\bSigma}=\frac{1}{n}\, \X^\top\X$.
The regularized MLE for undirected graphical models with latent confounders~\cite{chandrasekaran2010latent,meng2014learning} solves the following optimization problem:
\begin{align} \label{eq:sparselowrankproblem}
 \minimize_{\S,\,\L} \quad & \cL(\S+\L;\,\X) \;+\; \lambda \|\S\|_{1} \;+\; \mu \|\L\|_{*} \nonumber \\
 \subjectto \quad & -\,\L \succeq 0,\quad \S + \L \succeq 0 ,
\end{align}
where $\lambda,\mu>0$ are regularization constants and
$\cL(\bOmega;\X)
:= \langle \widehat{\bSigma},\, \bOmega \rangle
- \log\det(\bOmega)$ is the negative log-likelihood function.
In eq.\eqref{eq:sparselowrankproblem} the $\ell_1$ norm regularizer on $\S$ encourages sparsity since the $\ell_1$ norm is a convex surrogate for the number of none-zero entries in $\S$.
Similarly, the nuclear norm regularizer on $\L$ encourages low-rankness since the nuclear norm is a convex surrogate for the rank of $\L$.

\section{Main Results}

In this section, we first discuss the theoretical framework needed for the analysis, which borrows from undirected graphical models.
We then provide recovery guarantees for the confounder-free precision matrix.
Then, we turn our attention to directed graphical models and present a sufficient and necessary condition for identifiability.
Armed with those results, we end the section by presenting our algorithms and its provable theoretical guarantees.

\subsection{Decomposable Regularization for Undirected Graphical Models with Latent Confounders}

Our results build on the framework of~\cite{negahban2012unified,yang2013dirty} for estimation with superposition of structurally constrained parameters.
This framework was also used in the estimation of sparse and low-rank undirected graphical models in~\cite{meng2014learning}.
Next, we discuss the different definitions and assumptions relevant this problem.

\paragraph{Decomposable regularizers.}

We first review some terms in~\cite{negahban2012unified}.
Let $\M$ be the model subspace which captures constraints on the model parameters and $\overline{\M}^{\perp}$ be the perturbation subspace with perturbations far from the model subspace.
$(\M,\overline{\M}^{\perp})$ denote a pair of subspaces, where $\M\subseteq\overline{\M}$.
A regularization function $\cR(\cdot)$ is called decomposable for a subspace pair $(\M,\overline{\M}^{\perp})$ if
$\cR(u+v)=\cR(u)+\cR(v),\;\;\text{for all}\;\; u\in \M \;\;\text{and}\;\; v\in \overline{\M}^{\perp}$.

The $\ell_{1}$ norm is decomposable for sparse matrices.
Let $E\subseteq [p] \times [p]$ be a set of index pairs where the entries of the sparse matrix is non-zero.
Let $\M(E) = \overline{\M}(E)$ denote the subspace of all sparse matrices in $\R^{p \times p}$ supported in the subset of $E$. Let $E^c$ be the complement of $E$.
Note that $\|\A\|_{1}=\|\A_{E}\|_{1}+\|\A_{E^c}\|_{1}$ which implies decomposability.

The nuclear norm is decomposable for symmetric positive semi-definite low-rank matrices, as shown in~\cite{meng2014learning}.

Let $(\M,\overline{\M}^{\perp})$ be a pair of subspaces. Following~\cite{negahban2012unified}, we define the
\emph{structural error set} at $\bOmega^*$ by
\begin{align*}
& \cC(\M,\overline{\M}^{\perp};\bOmega^*)
:= 
\Bigl\{\bDelta \in \R^{n\times p}\; \Big| \;
\cR(\bDelta_{\overline{\M}^{\perp}})
\;\le\; 3\,\cR(\bDelta_{\overline{\M}}) + 4\,\cR\bigl(\bOmega^*_{\overline{\M}^{\perp}}\bigr)
\Bigr\},
\end{align*}
In essence, if the true precision matrix $\bOmega^*$ has only a small component in the orthogonal complement $\overline{\M}^{\perp}$, then any $\bDelta$ from $\cC$ must have a small projection onto $\overline{\M}^{\perp}$.

Let the true precision matrix $\bOmega^*$ decompose into a sparse
component $\S^*$ and a low-rank component $\L^*$, i.e., $\bOmega^* = \S^* + \L^*$.
For the sparse component $\S^*$, we define a pair of subspaces $\bigl(\M(E),\overline{\M}^{\perp}(E)\bigr)$, and we let $\cC(E) \;=\;
\cC\bigl(\M(E),\M(E)^\perp;\; \S^\ast\bigr)$.
For the low-rank component $\L^*$, we define a pair of subspaces $\bigl(\M(U),\overline{\M}^{\perp}(U)\bigr)$, and we let $\cC(U) \;=\;
\cC\bigl(\M(U),\M(U)^\perp;\; \L^\ast\bigr)$.
In later analysis, perturbations of $\bOmega^*$ are restricted to the directions in these two sets.

\paragraph{Restricted Strong Convexity (RSC).}

Given some set $\cC$, the loss $\cL$ satisfies RSC~\cite{negahban2012unified} on $\cC$ if there exists a tolerance function $\tau_{\cL}$ and some curvature parameter
$\kappa_{\cL}>0$ such that
\begin{equation*}
\delta\cL(\bDelta;\bOmega^*)
\;\ge\;
\kappa_{\cL}\|\bDelta\|_{F}^{2} - \tau_{\cL}(\bOmega^*),
\qquad \forall\, \bDelta\in\cC.
\end{equation*}
where $\bDelta = \bOmega^* - \bOmega$ and $\delta\cL$ is the first-order Taylor remainder of the loss $\cL$ at $\bOmega^*$, i.e.,
$\delta\cL(\bDelta;\bOmega^*)
= \cL(\bOmega^*+\bDelta)-\cL(\bOmega^*)
- \big\langle \nabla \cL(\bOmega^*), \bDelta \big\rangle$.

\paragraph{Structural Incoherence (SI).}

To control the interaction between the sparse and the low-rank components, we assume that $\cL$ satisfies the SI condition~\cite{yang2013dirty}.
That is, for all $\bDelta_S\in\cC(E),\;
\bDelta_L\in\cC(U)$:
\begin{equation*}
c_{\cL}(\bDelta_S,\bDelta_L;\bOmega^*)
\;\le\;
\frac{\kappa_{\cL}}{2}\Big(\|\bDelta_S\|_{F}^{2}
+\|\bDelta_L\|_{F}^{2}\Big),
\end{equation*}
where $\kappa_{\cL}$ is defined as in RSC, and $c_{\cL}$ is the incoherence function, defined as
$c_{\cL}(\bDelta_S,\bDelta_L;\bOmega^*)
= |\cL(\bOmega^*+\bDelta_S+\bDelta_L)+\cL(\bOmega^*) 
  -\,\cL(\bOmega^*+\bDelta_S)
  -\,\cL(\bOmega^*+\bDelta_L)|$.

Recall that we use the (undirected graphical model) optimization problem in eq.\eqref{eq:sparselowrankproblem}.
Unfortunately, the analysis of~\cite{meng2014learning} does not provide a recovery guarantee for $\S$, but for $\bOmega=\S+\L$.
In this paper, we follow similar assumptions as in~\cite{meng2014learning}, but provide a recovery guarantee for $\S$.

The following two assumptions for the Fisher information from~\cite{meng2014learning} allows to show that the problem in eq.\eqref{eq:sparselowrankproblem} fulfills the RSC and SI conditions.
The Fisher information at the true precision matrix $\bOmega^*$ is
$\F^* = \bOmega^{*-1}\otimes \bOmega^{*-1}$,
where $\otimes$ denotes the Kronecker product. The \emph{Fisher inner product}
between matrices $\bDelta_A$ and $\bDelta_B$ is defined as
$\langle \bDelta_A,\bDelta_B\rangle_{\F^*}
  := \vec(\bDelta_A)^{\top}\F^*\vec(\bDelta_B)
  = \tr\!\big(\bOmega^{*-1}\bDelta_A\,\bOmega^{*-1}\bDelta_B\big)$.
This inner product induces the \emph{Fisher norm}~\cite{kakade2010learning}, formally defined as
$\|\bDelta\|_{\F^*}^2
  := \vec(\bDelta)^{\top}\F^*\vec(\bDelta) 
  = \tr\!\big(\bOmega^{*-1}\bDelta\,\bOmega^{*-1}\bDelta\big)$.

\begin{assumption}[{\normalfont \textbf{Restricted Fisher Eigenvalue}}, Assumption 1 in~\cite{meng2014learning}]
\label{assum:rfe}
There exists a constant $\kappa_{\min}^{\ast}>0$ such that
\begin{equation*}
  \|\bDelta\|_{\F^{\ast}}^{2}
  \;\ge\; \kappa_{\min}^{\ast}\,\|\bDelta\|_{F}^{2},
  \qquad \forall\,\bDelta\in \cC(E)\cup \cC(U).
\end{equation*}
\end{assumption}
This RFE condition generalizes the restricted eigenvalue condition for sparsity-promoting linear regression problems~\cite{bickel2009simultaneous}.

Let
$\cP_{E}=\cP_{\overline{\M}(E)}$,
$\cP_{U}=\cP_{\overline{\M}(U)}$,
$\cP_{E^\perp}=\cP_{\overline{\M}(E)^\perp}$,
$\cP_{U^\perp}=\cP_{\overline{\M}(U)^\perp}$
be the projection operator onto the subspaces $\overline{\M}(E)$,$\overline{\M}(U)$, $\overline{\M}(E)^\perp$, $\overline{\M}(U)^\perp$, respectively. We assume the following conditions for the Fisher information.

\begin{assumption}[\textbf{Structural Fisher Incoherence}, Assumption 2 in~\cite{meng2014learning}]
\label{assum:sfi}
Given $M>6$, and 
define the subspace pairs 
$\bigl(\M(E),\overline{\M}^{\perp}(E)\bigr)$ and $\bigl(\M(U),\overline{\M}^{\perp}(U)\bigr)$. Let   $\Lambda \;=\; 2 + 3\max\!\left\{
  \frac{\lambda\sqrt{s}}{\mu\sqrt{r}},\;
  \frac{\mu\sqrt{r}}{\lambda\sqrt{s}}
\right\}$, where $s = |E|$ is the number of elements in $E$ and $r = {\rm rank}(U)$.
Given regularization parameters $\lambda$ and $\mu$, then the Fisher information $\F^*$ satisfies:
\begin{align*}
\max\Big\{
&\bar\sigma(\cP_{E}\F^{\ast}\cP_{U}),\;
\bar\sigma(\cP_{E^\perp}\F^{\ast}\cP_{U}),\;
\bar\sigma(\cP_{E}\F^{\ast}\cP_{U^\perp}),\;
\bar\sigma(\cP_{E^\perp}\F^{\ast}\cP_{U^\perp})
\Big\}
\;\le\; \frac{\kappa_{\min}^{\ast}}{c_1\,\Lambda^{2}},
\end{align*}
where $c_1=\frac{16M}{M-6}$ and $\bar\sigma(\cdot)$ is the maximum singular value.
\end{assumption}

The next technical result from~\cite{meng2014learning} show that Restricted Fisher Eigenvalue and Structural Fisher Incoherence, together imply the RSC and SI condition.
This allows us to use the framework of decomposable regularization for the analysis of eq.\eqref{eq:sparselowrankproblem}.

\begin{proposition}[\textbf{RFE and SFI imply RSC and SI}, Lemma 2 and 3 in~\cite{meng2014learning}]
\label{prop:meng}
Let $\bOmega^{*}$ be the true marginal precision matrix and suppose Assumption~\ref{assum:rfe} and Assumption~\ref{assum:sfi} hold for $\bOmega^{*}$, and let $M>6$.
Restricted Strong Convexity is satisfied with tolerance function $\tau_{\cL} = 0$ and curvature parameter
$
  \kappa_{\cL}
  \;=\;
  \frac{M-2}{2(M-1)}\,\kappa_{\min}^{\star}$ for all $\bDelta \in \cC(E)\cup\cC(U)$ such that
$\|\bDelta\|_{\F^{*}}^{2} \;\le\; \frac{1}{2M^{2}}$.
Furthermore, Structural Incoherence is
satisfied for all $\bDelta_S\in\cC(E)$ and
$\bDelta_L\in\cC(U)$, such that
$  \max\Bigl\{\;\|\bDelta_S\|_{\F^{\star}}^{2},\;
                 \|\bDelta_L\|_{\F^{\star}}^{2}\Bigr\}
  \;\le\; \frac{1}{6M^{2}} \,$.
\end{proposition}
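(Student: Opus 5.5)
The plan is to Taylor-expand the log-determinant loss around $\bOmega^*$ and control the third-order remainder once the perturbation is small in Fisher norm. For $\cL(\bOmega;\X)=\langle\widehat{\bSigma},\bOmega\rangle-\log\det\bOmega$, the linear term cancels in $\delta\cL$. Therefore
\[
\delta\cL(\bDelta;\bOmega^*) = -\log\det(\I+\bOmega^{*-1/2}\bDelta\bOmega^{*-1/2}) + \tr(\bOmega^{*-1}\bDelta).
\]
Let $\mathbf{W}:=\bOmega^{*-1/2}\bDelta\bOmega^{*-1/2}$ have eigenvalues $w_i$. Then $\delta\cL=\sum_i \big(w_i-\log(1+w_i)\big)$ and $\|\bDelta\|_{\F^*}^2=\sum_i w_i^2$.

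The scalar bound comes from the series $w-\log(1+w)=\tfrac{w^2}{2}-\tfrac{w^3}{3}+\dots$. For $|w|\le 1/M$ (with $M>6$), this gives $w-\log(1+w)\ge \tfrac{w^2}{2}\big(1-\tfrac{2|w|}{3}\sum_k |w|^k\big)\ge \tfrac{w^2}{2}\cdot\tfrac{M-2}{M-1}$, up to the exact constant bookkeeping. The hypothesis $\|\bDelta\|_{\F^*}^2\le 1/(2M^2)$ guarantees $\max_i|w_i|\le \|\mathbf{W}\|_F\le 1/M$. Summing over $i$ gives $\delta\cL\ge \tfrac{M-2}{2(M-1)}\|\bDelta\|_{\F^*}^2$. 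Assumption~\ref{assum:rfe} then gives $\|\bDelta\|_{\F^*}^2\ge\kappa^*_{\min}\|\bDelta\|_F^2$ on $\cC(E)\cup\cC(U)$, which is RSC with $\tau_\cL=0$ and the stated $\kappa_\cL$.

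For SI, the linear terms of $\cL$ cancel in $c_\cL$, so $c_\cL = |g(\bDelta_S+\bDelta_L)-g(\bDelta_S)-g(\bDelta_L)|$ with $g(\bDelta)=-\log\det(\bOmega^*+\bDelta)+\log\det\bOmega^*$. I will write this mixed difference as an integral of the second derivative:
\[
c_\cL=\Big|\int_0^1\!\!\int_0^1 \tr\big(\bOmega_{st}^{-1}\bDelta_S\bOmega_{st}^{-1}\bDelta_L\big)\,ds\,dt\Big|,\qquad \bOmega_{st}=\bOmega^*+s\bDelta_S+t\bDelta_L .
\]
The plan then has two steps.

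First, replace $\bOmega_{st}^{-1}$ by $\bOmega^{*-1}$. The Fisher-norm bound $\le 1/(6M^2)$ on each piece gives $\|\bOmega^{*-1/2}(s\bDelta_S+t\bDelta_L)\bOmega^{*-1/2}\|_2\le 2/(\sqrt6 M)$. So $\bOmega_{st}^{-1}=\bOmega^{*-1/2}(\I+\mathbf{W}_{st})^{-1}\bOmega^{*-1/2}$, and a Neumann-series perturbation bound yields
\[
|\tr(\bOmega_{st}^{-1}\bDelta_S\bOmega_{st}^{-1}\bDelta_L)|\le (1+O(1/M))\,|\langle\bDelta_S,\bDelta_L\rangle_{\F^*}| + O(1/M)\,\|\bDelta_S\|_{\F^*}\|\bDelta_L\|_{\F^*}.
\]

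Second, bound the cross Fisher inner product using Assumption~\ref{assum:sfi}. Decompose $\bDelta_S=\cP_E\bDelta_S+\cP_{E^\perp}\bDelta_S$ and $\bDelta_L=\cP_U\bDelta_L+\cP_{U^\perp}\bDelta_L$. Each of the four cross terms is at most $\frac{\kappa^*_{\min}}{c_1\Lambda^2}$ times the product of the corresponding Frobenius norms. The cone conditions defining $\cC(E)$ and $\cC(U)$ control the $\perp$-parts by the in-subspace parts. Converting $\ell_1$ and nuclear norms to Frobenius norms via $\sqrt{s}$ and $\sqrt{r}$ is exactly where $\Lambda$ arises, giving $\|\cP_{E^\perp}\bDelta_S\|_F+\|\cP_E\bDelta_S\|_F\lesssim\Lambda\|\bDelta_S\|_F$ and similarly for $\bDelta_L$. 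Hence $|\langle\bDelta_S,\bDelta_L\rangle_{\F^*}|\le \frac{\kappa^*_{\min}}{c_1}\cdot O(1)\,\|\bDelta_S\|_F\|\bDelta_L\|_F$. AM–GM then turns the product into $\tfrac12(\|\bDelta_S\|_F^2+\|\bDelta_L\|_F^2)$.

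The main obstacle is the bookkeeping that makes everything fit under $\frac{\kappa_\cL}{2}(\|\bDelta_S\|_F^2+\|\bDelta_L\|_F^2)$ with the specific $c_1=16M/(M-6)$. Two sources of error must be absorbed together: the Hessian-perturbation term of relative size $O(1/M)$, and the SFI cross term, which needs the $\Lambda^2$ factor to cancel against the cone-inflation factor. The $(M-6)$ in $c_1$ suggests the perturbation loss appears as a factor like $M/(M-6)$ relative to $\kappa^*_{\min}$. If the cone inflation is not cleanly linear in $\Lambda$, I would first check whether the cone sets of \cite{meng2014learning} carry the $\Lambda$-weighted combination, and adjust the constant choices to match.
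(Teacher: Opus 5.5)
The paper gives no argument of its own here; it imports Lemmas 2 and 3 of \cite{meng2014learning}. Your RSC half is correct and self-contained. In whitened coordinates $\delta\cL=\sum_i(w_i-\log(1+w_i))$ and $\|\bDelta\|_{\F^*}^2=\sum_i w_i^2$, and the scalar bound for $|w|\le 1/M$ gives at least the factor $\frac{M-2}{2(M-1)}$ before RFE is applied. Your SI set-up is also sound: the mixed difference as $\int_0^1\!\int_0^1\tr(\bOmega_{st}^{-1}\bDelta_S\bOmega_{st}^{-1}\bDelta_L)\,ds\,dt$, the Neumann comparison, and SFI for the Fisher cross term. Step 2 needs no cone argument at all. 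For complementary orthogonal projections, $\|\cP_E\bDelta_S\|_F+\|\cP_{E^\perp}\bDelta_S\|_F\le\sqrt2\,\|\bDelta_S\|_F$. Moreover, $\cC(E)$ and $\cC(U)$ do not involve $\lambda,\mu$, so $\Lambda$ cannot arise from them.

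The genuine gap is the step you defer as bookkeeping. Step 1 leaves a remainder of size $O(1/M)\,\|\bDelta_S\|_{\F^*}\|\bDelta_L\|_{\F^*}$, which must be placed under $\frac{\kappa_\cL}{2}(\|\bDelta_S\|_F^2+\|\bDelta_L\|_F^2)$. RFE compares the two norms in the wrong direction: it bounds the Fisher norm from below. The Fisher-radius restriction only trades one Fisher factor for $1/M$. What you need is an upper comparison $\|\bDelta\|_{\F^*}^2\le K\|\bDelta\|_F^2$, at least on the cones, with $K$ a small multiple of $M\kappa^*_{\min}$. Your proposal never produces one.

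This is not a matter of constants. Take $\bOmega^*$ block-diagonal with a block $\omega\I_2$ and unit diagonal elsewhere, and set $\bDelta_S=a(e_1e_2^\top+e_2e_1^\top)$, $\bDelta_L=b\,e_1e_1^\top$. Write $\alpha=a/\omega$ and $\beta=b/\omega$. Then:
\begin{itemize}
\item the cross term vanishes, $\langle\bDelta_S,\bDelta_L\rangle_{\F^*}=0$;
\item yet $c_\cL=|\log(1-\alpha^2\beta/(1+\beta-\alpha^2))|\approx\alpha^2|\beta|$;
\item while $\|\bDelta_S\|_F^2+\|\bDelta_L\|_F^2=\omega^2(2\alpha^2+\beta^2)$.
\end{itemize}
At the allowed Fisher radius $\alpha,\beta\asymp 1/M$, the ratio of $c_\cL$ to the SI bound is of order $1/(M\kappa_\cL\omega^2)$. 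This blows up as $\omega\to0$, while $\kappa^*_{\min}$, a minimum over the cones, need not grow. So RFE plus the radius condition cannot yield SI.

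With SFI exactly as written in this paper, the missing comparison is in fact available. Both $\cP_E+\cP_{E^\perp}$ and $\cP_U+\cP_{U^\perp}$ are the identity, so $\F^*$ is the sum of the four blocks and $\|\F^*\|_2\le 4\kappa^*_{\min}/(c_1\Lambda^2)$. That bound handles the cross term and the remainder at once. But the same inequality, combined with $\kappa^*_{\min}\le\|\F^*\|_2$ (witnessed by any nonzero $\bDelta\in\overline{\M}(E)\subseteq\cC(E)$), forces $c_1\Lambda^2\le 4$. This contradicts $c_1>16$ and $\Lambda\ge 5$. Hence the hypotheses as transcribed are jointly unsatisfiable, and the SI claim holds only vacuously. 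A meaningful proof needs an explicit assumption controlling the cubic remainder, for example bounding $\|\bSigma^*\|_2^2$ by a small multiple of $M\kappa^*_{\min}$. That is exactly the ingredient your proposal is missing.
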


\subsection{Recovery Guarantees for the Confounder-Free Precision Matrix}

Next, we present recovery guarantees for the undirected graphical model.
In particular, we show that the confounder-free precision matrix $\S$ can be successfully recovered.
We want to point out that the results in~\cite{meng2014learning} provided recovery guarantees for $\bOmega=\S+\L$ only, making it difficult to disentangle de contributions of $\S$ and $\L$ in the final error bound.
In contrast, we provide an error bound for both the sparse and low-rank components.

\begin{theorem}[\textbf{Deterministic bound for $\S$ and $\L$}] \label{thm:SLdeterministic}
Let $\bOmega^{*}$ be the true marginal precision matrix and suppose Assumption~\ref{assum:rfe} and Assumption~\ref{assum:sfi} hold for $\bOmega^{*}$.
Let $\widehat{\bSigma}=\tfrac1n \X^\top \X$ denote the sample covariance matrix, and $\bSigma^*$ the true covariance matrix.
If the regularization parameters fulfill
\begin{equation*}
\lambda \ge 2\|\widehat{\bSigma}-\bSigma^{*}\|_{\infty}
\quad\text{and}\quad
\mu \ge 2\|\widehat{\bSigma}-\bSigma^{*}\|_{2},
\end{equation*}
then the following error bound holds for the estimators $\widehat{\S}$ and $\widehat{\L}$:
\begin{equation*}
\| \widehat{\S}-\S^{*}\|_{\infty} + \| \widehat{\L}-\L^{*}\|_{\infty}
\;\le\;
\frac{6}{\kappa_{\cL}}\,
\max\Bigl\{
\lambda\sqrt{s},
\; \mu\sqrt{r}
\Bigr\}.
\end{equation*}
where $s$ is the number of none-zero entries in $\S^*$, $r$ is the number of latent confounders, and
$\kappa_{\cL} := \frac{M-2}{2(M-1)}\,\kappa_{\min}^{*}$.
\end{theorem}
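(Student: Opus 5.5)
We follow the superposition-structured analysis of~\cite{yang2013dirty}, specialized to the latent Gaussian graphical model as in~\cite{meng2014learning}, but keep the sparse and low-rank errors separate rather than merging them into one bound on $\bOmega$. Write $\bDelta_S = \widehat{\S}-\S^*$, $\bDelta_L = \widehat{\L}-\L^*$ and $\bDelta = \bDelta_S+\bDelta_L$. Since $\|\cdot\|_\infty \le \|\cdot\|_F$ entrywise, it suffices to show
\[
\|\bDelta_S\|_F+\|\bDelta_L\|_F \le \frac{6}{\kappa_{\cL}}\max\{\lambda\sqrt{s},\mu\sqrt{r}\}.
\]

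\textbf{Step 1 (cone condition).} Start from the optimality of $(\widehat{\S},\widehat{\L})$ against the feasible point $(\S^*,\L^*)$:
\[
\cL(\bOmega^*+\bDelta)-\cL(\bOmega^*)\le \lambda(\|\S^*\|_1-\|\widehat{\S}\|_1)+\mu(\|\L^*\|_*-\|\widehat{\L}\|_*).
\]
The gradient is $\nabla\cL(\bOmega^*)=\widehat{\bSigma}-\bSigma^*$. Apply Hölder separately to each component: $|\langle\nabla\cL,\bDelta_S\rangle|\le\frac{\lambda}{2}\|\bDelta_S\|_1$ and $|\langle\nabla\cL,\bDelta_L\rangle|\le\frac{\mu}{2}\|\bDelta_L\|_*$. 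These follow from the assumed choices of $\lambda$ and $\mu$. Then use convexity of $\cL$, so that $\delta\cL\ge0$, together with decomposability of $\|\cdot\|_1$ over $(\M(E),\M(E)^\perp)$ and of $\|\cdot\|_*$ over $(\M(U),\overline{\M}^\perp(U))$, with $\S^*\in\M(E)$ and $\L^*\in\M(U)$. The standard argument of~\cite{negahban2012unified} gives a weighted joint cone condition. Following~\cite{yang2013dirty}, we argue (possibly after reweighting) that $\bDelta_S\in\cC(E)$ and $\bDelta_L\in\cC(U)$.

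\textbf{Step 2 (curvature).} We need the localization $\|\bDelta_S\|_{\F^*}^2,\|\bDelta_L\|_{\F^*}^2\le 1/(6M^2)$ so that Proposition~\ref{prop:meng} applies. We obtain this with the usual convexity/rescaling argument: if the error exceeded this radius, consider the point on the segment at that radius, where the objective is still no larger. Proposition~\ref{prop:meng} then gives RSC with $\tau_\cL=0$, and SI gives
\begin{align*}
\delta\cL(\bDelta;\bOmega^*) &\ge \delta\cL(\bDelta_S)+\delta\cL(\bDelta_L)-c_\cL(\bDelta_S,\bDelta_L)\\
&\ge \tfrac{\kappa_\cL}{2}\big(\|\bDelta_S\|_F^2+\|\bDelta_L\|_F^2\big).
\end{align*}
Getting this step right is the main obstacle. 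The issue is how the Taylor remainder of the sum splits into separate terms plus the incoherence term; the split uses the definition of $c_\cL$ with linear parts cancelling. It also requires the localization to be checked carefully, since RSC is only local for the log-det loss.

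\textbf{Step 3 (combine).} Plug the curvature bound into the basic inequality. Bound the right-hand side by
\[
\tfrac{3\lambda}{2}\|\cP_E\bDelta_S\|_1+\tfrac{3\mu}{2}\|\cP_U\bDelta_L\|_*\le \tfrac{3}{2}\big(\lambda\sqrt{s}\|\bDelta_S\|_F+\mu\sqrt{2r}\|\bDelta_L\|_F\big),
\]
using subspace compatibility constants; the low-rank constant may need tuning to $\sqrt r$. This yields
\[
\tfrac{\kappa_\cL}{2}(a^2+b^2)\le \tfrac32\max\{\lambda\sqrt s,\mu\sqrt r\}\,(a+b)
\]
with $a=\|\bDelta_S\|_F$ and $b=\|\bDelta_L\|_F$. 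Since $(a+b)^2\le 2(a^2+b^2)$, we get $a+b\le\frac{6}{\kappa_\cL}\max\{\lambda\sqrt s,\mu\sqrt r\}$. Finally pass to $\|\cdot\|_\infty$, which completes the proof.
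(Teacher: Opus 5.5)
You follow the same superposition framework as the paper. The paper simply invokes Theorem~1/Corollary~4 of~\cite{yang2013dirty} with $|I|=2$, $\tau_{\cL}=0$, $\bar\kappa=\kappa_{\cL}/2$ and compatibility constants $\sqrt{s}$, $\sqrt{r}$; you instead unpack that theorem, and one step of the unpacking fails.

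At the end of Step~1 you assert that, ``possibly after reweighting,'' $\bDelta_S\in\cC(E)$ and $\bDelta_L\in\cC(U)$. The basic inequality gives only the joint condition
\[
\lambda\|(\bDelta_S)_{E^c}\|_1+\mu\|\cP_{U^\perp}\bDelta_L\|_*\;\le\;3\lambda\|(\bDelta_S)_{E}\|_1+3\mu\|\cP_{U}\bDelta_L\|_*,
\]
which implies neither individual cone condition. For example, take $(\bDelta_S)_E=0$ but $(\bDelta_S)_{E^c}\neq 0$, with $\cP_U\bDelta_L$ large. Then the joint inequality holds while $\bDelta_S\notin\cC(E)$, since $\S^*_{E^c}=0$ forces $\|(\bDelta_S)_{E^c}\|_1\le 0$ there. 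No reweighting splits one inequality about a sum into two separate inequalities.

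Step~2 needs exactly these memberships. Proposition~\ref{prop:meng} delivers RSC only on $\cC(E)\cup\cC(U)$, and SI only for $\bDelta_S\in\cC(E)$, $\bDelta_L\in\cC(U)$. So your curvature bound $\delta\cL(\bDelta;\bOmega^*)\ge\frac{\kappa_{\cL}}{2}(\|\bDelta_S\|_F^2+\|\bDelta_L\|_F^2)$ is unsupported, and it is the only load-bearing input to Step~3. A self-contained argument must work from the joint cone alone. One way is to bound the cross term $\langle\bDelta_S,\bDelta_L\rangle_{\F^*}$ directly via Assumption~\ref{assum:sfi}, controlling the off-model part of each component through the joint inequality; the ratio $\lambda\sqrt{s}/(\mu\sqrt{r})$ inside $\Lambda$ is there to absorb this cross-contamination. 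Otherwise, cite~\cite{yang2013dirty} as a black box, as the paper does.

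Two smaller points. First, your rescaling argument for the localization $\max\{\|\bDelta_S\|_{\F^*}^2,\|\bDelta_L\|_{\F^*}^2\}\le 1/(6M^2)$ gives a contradiction only if the final bound, moved to the Fisher norm via $\|\bDelta\|_{\F^*}\le\|\bSigma^*\|_2\|\bDelta\|_F$, already lies inside that radius. That is an extra smallness requirement on $\max\{\lambda\sqrt{s},\mu\sqrt{r}\}$ that the statement does not contain; the paper's citation-based proof is silent on it too. Second, your $\sqrt{2r}$ compatibility constant would put $\mu\sqrt{2r}$ in the bound. To match the statement you must commit to $\Psi_L(\overline{\M}(U))\le\sqrt{r}$, which is what the paper asserts.
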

(Ommitted proofs can be found in Appendix~\ref{app:proofs}.)

\begin{remark}
Note that the above guarantee is in the form $\| \widehat{\S}-\S^{*}\|_{\infty} + \| \widehat{\L}-\L^{*}\|_{\infty} \le \epsilon$.
Since norms are non-negative, we have that $\| \widehat{\S}-\S^{*}\|_{\infty} \le \epsilon$ and $\| \widehat{\L}-\L^{*}\|_{\infty} \le \epsilon$.
\end{remark}

\cite{yang2013dirty} provided a general result for estimation with superposition of structurally constrained parameters.
We prove the theorem above by applying the general results of~\cite{yang2013dirty} to the specific problem of sparse and low-rank regularization in eq.\eqref{eq:sparselowrankproblem}.

Theorem~\ref{thm:SLdeterministic} provides a deterministic statement that does not consider the fact that data is random, and thus the sample covariance matrix $\widehat{\bSigma}$ is a random variable.
Next, we address this issue.

\begin{theorem}[\textbf{High-probability bound for $\S$ and $\L$}] \label{thm:SL}
Let $\bOmega^{*}$ be the true marginal precision matrix and suppose Assumption~\ref{assum:rfe} and Assumption~\ref{assum:sfi} hold for $\bOmega^{*}$.
Choose some constants
$C_1\ge 3/2$ and $C_{2}\ge 1$.
Assume the number of samples $n$ satisfies
$n \ge \max\{4C_{1}^{2}\log p,\; C_{2}^{2}p\}$.
Set the regularization parameters as
\begin{equation*}
\lambda = 160 C_{1}\,\bar\sigma^{*}\sqrt{\frac{\log p}{n}}
\qquad\text{and}\qquad
\mu = 16 C_{2}\,\rho^{*}\sqrt{\frac{p}{n}},
\end{equation*}
where $\bar\sigma^{*}=\max_{i}\bSigma^{*}_{i,i}$, 
$\rho^{*}=\| \bSigma^{*}\|_{2}$ and $p$ is the number of observed variables.
With probability at least
$1-4p^{-2(C_{1}-1)}-2\exp\!\bigl(-\tfrac{C_{2}^{2}p}{2}\bigr)$, we have
\begin{equation*}
\| \widehat{\S}-\S^{*}\|_{\infty} + \| \widehat{\L}-\L^{*}\|_{\infty}
\;\le\;
\max\Bigl\{
c_{1}\sqrt{\frac{s\log p}{n}},
\; c_{2}\sqrt{\frac{rp}{n}}
\Bigr\},
\end{equation*}
where $c_{1}=\dfrac{960}{\kappa_{L}}\bar\sigma^{*}C_1$,
$c_{2}=\dfrac{96}{\kappa_{L}}\rho^{*}C_2$ and $\kappa_{\cL}=\frac{M-2}{2(M-1)}\,\kappa_{\min}^{*}$, $s$ is the number of none-zero entries in $\S^*$, and $r$ is the number of latent confounders.
\end{theorem}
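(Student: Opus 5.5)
The plan is to reduce Theorem~\ref{thm:SL} to Theorem~\ref{thm:SLdeterministic}. Two concentration inequalities will show that, with high probability, the stated choices of $\lambda$ and $\mu$ satisfy
\[
\lambda \ge 2\|\widehat{\bSigma}-\bSigma^*\|_\infty
\qquad\text{and}\qquad
\mu \ge 2\|\widehat{\bSigma}-\bSigma^*\|_2 .
\]
On the intersection of the two good events, the deterministic bound gives
\[
\frac{6}{\kappa_\cL}\max\{\lambda\sqrt{s},\mu\sqrt{r}\}.
\]
Substituting $\lambda = 160C_1\bar\sigma^*\sqrt{\log p/n}$ gives $\frac{960}{\kappa_\cL}\bar\sigma^* C_1\sqrt{s\log p/n}$. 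Substituting $\mu = 16C_2\rho^*\sqrt{p/n}$ gives $\frac{96}{\kappa_\cL}\rho^* C_2\sqrt{rp/n}$. These are exactly $c_1$ and $c_2$. A union bound over the two failure events gives the stated probability. Throughout, I assume the samples are i.i.d.\ Gaussian, or at least sub-Gaussian, as implicit in the MLE setting.

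\textbf{Entrywise step.} I would use the standard result of Ravikumar et al.\ for sub-Gaussian or Gaussian vectors. For each pair $(i,j)$ and $\delta\le 40\bar\sigma^*$,
\[
P\bigl(|\widehat{\bSigma}_{ij}-\bSigma^*_{ij}|>\delta\bigr)\le 4\exp\!\bigl(-n\delta^2/(3200\bar\sigma^{*2})\bigr).
\]
This comes from writing $x_ix_j$ as a difference of squares of normalized sums and applying chi-square/sub-exponential tail bounds. Set $\delta=\lambda/2=80C_1\bar\sigma^*\sqrt{\log p/n}$. The exponent becomes $2C_1^2\log p$. A union bound over the $p^2$ entries then gives failure probability $4p^{2-2C_1^2}\le 4p^{-2(C_1-1)}$, since $C_1^2\ge C_1$. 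The condition $\delta\le 40\bar\sigma^*$ reduces to $2C_1\sqrt{\log p/n}\le 1$, which is exactly $n\ge 4C_1^2\log p$. This is where that sample-size requirement enters. The constants should be checked against the specific version of the lemma used.

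\textbf{Spectral step.} Write $\x=\bSigma^{*1/2}\mathbf{w}$ with $\mathbf{w}$ standard. Then
\[
\|\widehat{\bSigma}-\bSigma^*\|_2\le\rho^*\Bigl\|\tfrac1n W^\top W-\I\Bigr\|_2 .
\]
By the Davidson--Szarek / Vershynin bound on extreme singular values of Gaussian matrices, with probability at least $1-2\exp(-t^2/2)$ all singular values of $W/\sqrt n$ lie in $[1-\delta,1+\delta]$, where $\delta=\sqrt{p/n}+t/\sqrt n$. Hence $\|\tfrac1nW^\top W-\I\|_2\le 3\delta$ whenever $\delta\le1$. Take $t=C_2\sqrt p$, so $\delta=(1+C_2)\sqrt{p/n}\le 2C_2\sqrt{p/n}$. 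The bound then becomes $6C_2\rho^*\sqrt{p/n}\le 8C_2\rho^*\sqrt{p/n}=\mu/2$, with failure probability $2\exp(-C_2^2p/2)$. The requirement $\delta\le 1$ is, up to constants, the condition $n\ge C_2^2 p$. Here the constant $16$ in $\mu$ leaves slack; if needed, I would use the bound $\max(2\delta,\delta^2)$ to handle the case $\delta>1$ without extra assumptions.

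\textbf{Main obstacle and remaining checks.} The main obstacle is bookkeeping of constants in the two concentration lemmas, so that they match the prescribed $160$ and $16$ and the stated probability. This is especially delicate for the spectral bound when $n$ is only of order $C_2^2p$, so that $\delta$ may be near $1$. I would also verify that Theorem~\ref{thm:SLdeterministic} implicitly requires the localization radius in Proposition~\ref{prop:meng} to hold. I would treat that as part of the assumptions inherited from the deterministic theorem rather than reprove it.
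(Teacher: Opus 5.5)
Your proposal is correct and follows the paper's proof. The paper also reduces to Theorem~\ref{thm:SLdeterministic} by showing that both regularization conditions hold on a high-probability event, takes a union bound, and substitutes $\lambda,\mu$ to obtain $c_1,c_2$; it just cites Lemmas 5 and 6 of Meng et al.\ (i.e., the Ravikumar et al.\ entrywise bound and the Davidson--Szarek-type spectral bound that you reconstruct). One small fix in your spectral step: $n\ge C_2^2p$ only gives $\delta\le 1+1/C_2\le 2$, not $\delta\le 1$, and the correct bound is $(1+\delta)^2-1=2\delta+\delta^2\le 4\delta\le 8C_2\sqrt{p/n}$ (not $\max(2\delta,\delta^2)$), which equals exactly $\mu/(2\rho^*)$, so the constant $16$ is fully used rather than leaving slack.
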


Later in Section~\ref{sec:algoandguarantees}, we motivate an algorithm that removes terminal nodes sequentially, one at a time, while having guarantees of recovering the true topological ordering.
At each iteration, our algorithm needs to solve eq.\eqref{eq:sparselowrankproblem}.
For our algorithm to have good statistical guarantees, we now show that if a (entrywise $\ell_\infty$ or spectral) norm deviation holds for matrices $\S$ and $\L$, then it also holds for all principal submatrices of $\S$ and $\L$.

\begin{claim} \label{claim:Snorms}
If $\lambda \ge 2\|\widehat{\bSigma}-\bSigma^{*}\|_{\infty}$ and $\mu \ge 2\|\widehat{\bSigma}-\bSigma^{*}\|_{2}$, then for all $I \subset [p]$, we have $\lambda \ge 2\|\widehat{\bSigma}_{I,I}-\bSigma^{*}_{I,I}\|_{\infty}$ and $\mu \ge 2\|\widehat{\bSigma}_{I,I}-\bSigma^{*}_{I,I}\|_{2}$.
\end{claim}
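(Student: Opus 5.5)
Set $\bDelta := \widehat{\bSigma}-\bSigma^{*}$, which is a symmetric $p\times p$ matrix. The key observation is that sample and population covariances commute with taking principal submatrices: $\widehat{\bSigma}_{I,I} = \frac1n \X_{\bullet,I}^\top \X_{\bullet,I}$ is exactly the sample covariance of the observed subvector $\x_I$, and $\bSigma^*_{I,I}=\E[\x_I\x_I^\top]$. Hence $\widehat{\bSigma}_{I,I}-\bSigma^{*}_{I,I} = \bDelta_{I,I}$. The claim therefore reduces to two norm inequalities for principal submatrices: $\|\bDelta_{I,I}\|_\infty \le \|\bDelta\|_\infty$ and $\|\bDelta_{I,I}\|_2 \le \|\bDelta\|_2$. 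Once these hold, we chain them with the hypotheses, e.g. $2\|\bDelta_{I,I}\|_\infty \le 2\|\bDelta\|_\infty \le \lambda$, and similarly for $\mu$.

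The entrywise bound is immediate. The entries of $\bDelta_{I,I}$ form a subset of the entries of $\bDelta$, so the maximum absolute entry can only decrease.

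For the spectral bound, we write $\bDelta_{I,I} = \mathbf{P}_I^\top \bDelta \mathbf{P}_I$. Here $\mathbf{P}_I\in\R^{p\times |I|}$ is the matrix of columns of $\I$ indexed by $I$, which has orthonormal columns, so $\|\mathbf{P}_I\|_2 = 1$. Submultiplicativity then gives $\|\bDelta_{I,I}\|_2 \le \|\mathbf{P}_I^\top\|_2\|\bDelta\|_2\|\mathbf{P}_I\|_2 = \|\bDelta\|_2$. Equivalently, in variational form, $\|\bDelta_{I,I}\|_2 = \max_{\|\mathbf u\|=\|\mathbf v\|=1} \mathbf u^\top \bDelta_{I,I}\mathbf v$. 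Padding $\mathbf u$ and $\mathbf v$ with zeros outside $I$ gives feasible unit vectors in $\R^p$ that achieve the same value for $\bDelta$, so the maximum over $\R^p$ is at least as large.

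There is no real obstacle here. The only point that needs care is the first identification: the estimator at each iteration of the later algorithm uses the sample covariance of the retained variables only. That sample covariance coincides with the principal submatrix of $\widehat{\bSigma}$, and its population counterpart is the principal submatrix of $\bSigma^*$, so the same deviation matrix $\bDelta$ controls every subproblem.
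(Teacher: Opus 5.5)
Your proof is correct and follows the same route as the paper. The paper also sets $\bDelta=\widehat{\bSigma}-\bSigma^{*}$ and appeals to the fact that the entrywise $\ell_\infty$ and spectral norms of a principal submatrix $\bDelta_{I,I}$ are bounded by those of $\bDelta$; you supply the details the paper calls ``straightforward'' (entries form a subset, and $\bDelta_{I,I}=\mathbf{P}_I^\top\bDelta\mathbf{P}_I$ with $\|\mathbf{P}_I\|_2=1$), and your argument correctly does not need the symmetry of $\bDelta$ that the paper mentions.
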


\subsection{Identifiability for SEMs over the Observed Variables}

Now, we turn our attention to directed graphical models, specifically to SEMs.
The previous section focused on the recovery of the confounder-free precision matrix $\S$.
Here we provide assumptions under which the weight matrix $\B$ can be successfully recovered from $\S$.

The next assumption is very relevant since our algorithm sequentially removes terminal nodes.
Thus, we require that our RSC and SI conditions hold for all induced subgraphs of $G$ and topological orderings.
By Proposition~\ref{prop:meng}, RFE and SFI imply the above conditions.

\begin{assumption}[\textbf{RFE and SFI for all induced subgraphs}] \label{assum:kappamin}
Let $\bOmega[m,\tau]$ denote the true precision matrix over the observed nodes in $V[m,\tau]$.
For all induced subgraphs $G[m,\tau]$, $m\in[p]$, and all topological orderings $\tau\in\T_{G}$, suppose Assumption~\ref{assum:rfe} and Assumption~\ref{assum:sfi} hold with constant $\kappa_{\min}^{*} > 0$.
That is, $\kappa_{\min}^{*}$ is the smallest constant among all $m\in[p]$, and $\tau\in\T_{G}$ satisfying Assumption~\ref{assum:rfe} and Assumption~\ref{assum:sfi}.
\end{assumption}

In the context of fully observed data (without latent confounders), the next identifiability condition is not only sufficient but also necessary for the identifiability of SEMs.
For instance, Lemma 1 in~\cite{ghoshal2018learning} showed that if the identifiability condition does not hold, then there exists exponentially many different SEMs that could have produced the given training data.

\begin{assumption}[\textbf{Identifiability Condition}, Assumption 1 in~\cite{ghoshal2018learning}] \label{assum:ident}
Let $(G,\B,\{\sigma_i^2\}_{i=1}^p))$ be a SEM and let $\S=(\I-\B)^{\top}\N_X^{-1}(\I-\B)$ be the related precision matrix, where $\N_X = \Diag\!\left(\{\sigma_i^2\}_{i=1}^p\right)$.
For all $(i,j)\in V[m,\tau]\times V[m,\tau]$, $m\in [p]$, and
$\tau\in \T_G$ such that $\phi_{G[m,\tau]}(i)=\varnothing$ and
$\phi_{G[m,\tau]}(j)\neq \varnothing$,
\begin{equation*}
\frac{1}{\sigma_i^{2}}
\;<\;
\frac{1}{\sigma_j^{2}}
\;+\;
\sum_{\,l\in \phi_{G[m,\tau]}(j)} \frac{\B_{l,j}^{2}}{\sigma_{l}^{2}}.
\end{equation*}
\end{assumption}

Given the above assumption, the next technical result from~\cite{ghoshal2018learning} allows us to recover both the true topological order of the graph $G$ as well as the edge weights $\B$, from the confounder-free precision matrix $\S$.

\begin{proposition}[\textbf{Recovery of $\B$ from $\S$}, Proposition 3 and 4 in~\cite{ghoshal2018learning}] \label{prop:ghoshal}
Let $(G,\B,\{\sigma_i^2\}_{i=1}^p))$ be a SEM and let $\S=(\I-\B)^{\top}\N_X^{-1}(\I-\B)$ be the related precision matrix, where $\N_X = \Diag\!\left(\{\sigma_i^2\}_{i=1}^p\right)$.
Under Assumption~\ref{assum:ident}, $i$ is a terminal node in $G$ if $i \in \argmin \S_{i,i}$.
Moreover, if $i$ is a terminal node in $G$, then $\B_{i,\bullet} = -\,\frac{\S_{i,\bullet}}{\S_{i,i}}$ and $\sigma_i^{2}=1/\S_{i,i}$.
\end{proposition}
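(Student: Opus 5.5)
The proof is a direct computation from the definition $\S=(\I-\B)^{\top}\N_X^{-1}(\I-\B)$, following the argument of~\cite{ghoshal2018learning}.

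\textbf{Step 1 (diagonal and rows of $\S$).} For each $k$ and each $j\neq k$,
\[
\S_{k,k}=\sum_{l}\frac{(\I-\B)_{l,k}^2}{\sigma_l^2}=\frac{1}{\sigma_k^2}+\sum_{l\in\phi_G(k)}\frac{\B_{l,k}^2}{\sigma_l^2},
\qquad
\S_{k,j}=-\frac{\B_{k,j}}{\sigma_k^2}-\frac{\B_{j,k}}{\sigma_j^2}+\sum_{l}\frac{\B_{l,k}\B_{l,j}}{\sigma_l^2}.
\]
The first formula uses $\B_{k,k}=0$, since the graph is acyclic.

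\textbf{Step 2 (terminal node).} Suppose $i$ is terminal, so $\B_{l,i}=0$ for all $l$. Step 1 then gives $\S_{i,i}=1/\sigma_i^2$. For $j\neq i$, every term involving $\B_{\bullet,i}$ vanishes, leaving $\S_{i,j}=-\B_{i,j}/\sigma_i^2$. Hence $\B_{i,\bullet}=-\S_{i,\bullet}/\S_{i,i}$ and $\sigma_i^2=1/\S_{i,i}$. The diagonal entry is consistent with this, since $\B_{i,i}=0$.

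\textbf{Step 3 (the minimizer is terminal).} Apply Assumption~\ref{assum:ident} with $m=p$ and any $\tau\in\T_G$, so that $G[p,\tau]=G$. Any DAG has at least one terminal node, so such an $i$ exists. For every terminal $i$ and every non-terminal $j$, Step 1 gives
\[
\S_{i,i}=\frac{1}{\sigma_i^2}<\frac{1}{\sigma_j^2}+\sum_{l\in\phi_G(j)}\frac{\B_{l,j}^2}{\sigma_l^2}=\S_{j,j}.
\]
Thus every non-terminal diagonal entry strictly exceeds every terminal one. Any $i\in\argmin_k \S_{k,k}$ must therefore be terminal.

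\textbf{Main obstacle.} The only delicate point is reading the assumption with $m=p$. It must apply to the full graph $G$, not only to the smaller induced subgraphs. Its stated quantifier $m\in[p]$ does include $m=p$, so the argument goes through.
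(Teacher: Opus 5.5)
Your argument is the standard one from \cite{ghoshal2018learning}; the paper gives no proof of its own and defers to that reference. You expand $\S$ entrywise and observe that a terminal node has $\S_{i,i}=1/\sigma_i^2$ and $\S_{i,j}=-\B_{i,j}/\sigma_i^2$ for $j\neq i$. You then apply Assumption~\ref{assum:ident} with $m=p$ (so $G[p,\tau]=G$), together with the existence of a sink, to conclude that any diagonal minimizer is terminal. Steps 1 and 3 are correct as written.

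There is one false assertion in Step 2: the diagonal entry is \emph{not} consistent with the formula. At $j=i$ the right-hand side is $-\S_{i,i}/\S_{i,i}=-1$, while $\B_{i,i}=0$. What your computation actually proves is $\B_{i,-i}=-\S_{i,-i}/\S_{i,i}$, which is the off-diagonal form in which \cite{ghoshal2018learning} states the result. Equivalently, it proves $(\I-\B)_{i,\bullet}=\S_{i,\bullet}/\S_{i,i}$ for the whole row. The statement's $\B_{i,\bullet}$ therefore has to be read off the diagonal. This is exactly why Algorithm~\ref{alg:getdag} resets $\widehat{\B}_{\text{vars}[i],\text{vars}[i]}$ to $0$ after applying the row formula. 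With that correction, your proof is complete.
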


The following assumption was inspired by Assumption 2 in~\cite{ghoshal2018learning}, and is essentially a stricter requirement than Assumption~\ref{assum:ident}, that allows to handle the randomness of the finite-sample data.

\begin{assumption}[\textbf{Finite Sample Identifiability Condition}] \label{assum:identfinite}
Let $(G,\B,\{\sigma_i^2\}_{i=1}^p))$ be a SEM and let $\S=(\I-\B)^{\top}\N_X^{-1}(\I-\B)$ be the related precision matrix, where $\N_X = \Diag\!\left(\{\sigma_i^2\}_{i=1}^p\right)$.
Suppose Assumption~\ref{assum:kappamin} hold, and let $\S[(m,\tau)]$ denote the confounder-free precision matrix over the observed nodes in $V[m,\tau]$.
Then we assume that:
\begin{description}
\item[i)] For all $(i,j)\in V[m,\tau]\times V[m,\tau]$, $m\in[p]$, and $\tau\in\T_{G}$
such that $\phi_{G[m,\tau]}(i)=\varnothing$ and $\phi_{G[m,\tau]}(j)\neq\varnothing$,
\begin{equation*}
\hspace{-0.125in}
\frac{1}{\sigma_i^2}
<
\frac{1}{\sigma_j^2}
+
\sum_{l\in \phi_{G[m,\tau]}(j)} \frac{\B_{l,j}^{2}}{\sigma_l^{2}}
\;-\;
\frac{12}{\kappa_{\cL}}\,
\max\Bigl\{
\lambda\sqrt{s},
\; \mu\sqrt{r}
\Bigr\},
\end{equation*}
\item[ii)] \(\displaystyle
\min\big\{\,|(\S[m,\tau])_{i,j}|:\,(\S[m,\tau])_{i,j}\neq 0,\ (i,j)\in V[m,\tau]\times V[m,\tau],\ m\in[p],\ \tau\in\T_G \big\} 
\;>\; 
\frac{6}{\kappa_{\cL}}\,
\max\Bigl\{
\lambda\sqrt{s},
\; \mu\sqrt{r}
\Bigr\},
\)
\end{description}
where $s$ is the number of none-zero entries in $\S^*$, $r$ is the number of latent confounders, and
$\kappa_{\cL} = \frac{M-2}{2(M-1)}\,\kappa_{\min}^{*}$ with $M\;>\;6$.
\end{assumption}

\subsection{Algorithm and Recovery Guarantees for SEMs over the Observed Variables} \label{sec:algoandguarantees}

Here we present our algorithm and show its theoretical guarantees.
First, we motivate an algorithm that recovers the true topological ordering of $G$, by detecting terminal nodes sequentially, one at a time.
More specifically, at each iteration we solve the (undirected graphical model) optimization problem in eq.\eqref{eq:sparselowrankproblem} in order to recover $\S$.
By Proposition~\ref{prop:ghoshal}, the node $i$ with smallest value in the diagonal ($\S_{i,i}$) is a terminal node.
We then recover the weights $\B_{i,\bullet}$ and noise variance $\sigma_i^2$ from $\S$, also by using Proposition~\ref{prop:ghoshal}.
Algorithm~\ref{alg:getdag} describes this process in detail.

\begin{algorithm}
\caption{Learning a SEM over the observed variables}
\label{alg:getdag}
\begin{algorithmic}[1]
  \STATE \textbf{Input:} Data matrix $\X\in\R^{n\times p}$, regularization parameters $\lambda>0$ and $\mu>0$
  \STATE $\widehat{\B}\gets \zero \in \R^{p \times p}$;
  vars $\gets [p]$;
  $\widehat{\bSigma} \gets \X^\top \X / n$
  \FOR{$t=1,\dots,p-1$}
    \STATE Solve eq. \eqref{eq:sparselowrankproblem} to obtain $\widehat{\S}$ and $\widehat{\L}$ using $\widehat{\bSigma}$;
    $i \gets \argmin \widehat{\S}_{i,i}$ \hfill 
    \STATE \textbf{for} each index $j$ and value $v$ in vars \textbf{do} $\widehat{\B}_{\text{vars}[i],\,v} \gets -\widehat{\S}_{i,j}/\widehat{\S}_{i,i}$
    \STATE $\widehat{\B}_{\text{vars}[i],\text{vars}[i]} \gets 0$;
    $\widehat{\sigma}_{\text{vars}[i]}^{2}=1/\widehat{\S}_{i,i}$;
    vars $\gets$ vars $\setminus \{i\}$;
    $\X \gets$ $\X_{\bullet,-i}$;
    $\widehat{\bSigma} \gets \X^\top \X / n$
  \ENDFOR
  \RETURN $(\widehat{G},\widehat{\B},\{\widehat{\sigma}_i^2\}_{i=1}^p)$ where $\widehat{G}=([p],\widehat{E})$ and $\widehat{E}=\supp(\widehat{\B})$
\end{algorithmic}
\end{algorithm}

Next, we show the theoretical guarantees for Algorithm~\ref{alg:getdag}.
We show that the value of every entry in $\widehat{\B}$ is close to those of $\B^{*}$.
In order to do so, we use the entrywise $\ell_\infty$ norm in our deviation bound.
Furthermore, we show that the recovered edges $\supp(\widehat{\B})$ are the same as the true edges $\supp(\B^*)$.

\begin{theorem}[\textbf{Deterministic recovery guarantee for $\B$}] \label{thm:Bdeterministic}
Suppose Assumption~\ref{assum:kappamin} and Assumption~\ref{assum:identfinite} hold.
Let $(G^*, \B^*,\,\{{\sigma^*_i}^{2}\})$ be the true SEM and let $\S^*=(\I-\B^*)^{\top}{\N^*_X}^{-1}(\I-\B^*)$ be the related precision matrix, where $\N^*_X = \Diag\!\left(\{{\sigma^*_i}^2\}_{i=1}^p\right)$.
If the regularization parameters satisfy
$\lambda \ge 2\|\widehat{\bSigma}-\bSigma^{*}\|_{\infty}$ and 
$\mu \ge 2\|\widehat{\bSigma}-\bSigma^{*}\|_{2}$.
Algorithm ~\ref{alg:getdag} returns an estimation
$\widehat{\B}$ such that
\begin{equation*}
\| \widehat{\B}-\B^{*}\|_{\infty}
\;\le\; c\,(1+\|\B^*\|_\infty) \, m,
\end{equation*}
where $m = \frac{6}{\kappa_{\cL}}\,
\max\Bigl\{
\lambda\sqrt{s},
\; \mu\sqrt{r}
\Bigr\}$, $s$ is the number of none-zero entries in $\S^*$, $r$ is the number of latent confounders,
$\kappa_{\cL} = \frac{M-2}{2(M-1)}\,\kappa_{\min}^{*}$, $M\;>\;6$
and $c \ge \max_{i\in[p]} \sigma_i^{2}/\bigl(1-m\,\sigma_i^{2}\bigr) > 0$.
Furthermore, Algorithm~\ref{alg:getdag} correctly recovers the edges of the true SEM the true SEM $(G^{*}, \B^{*},\,\{{\sigma^*_i}^{2}\})$, i.e., $\supp(\widehat{\B})=\supp(\B^*)$.
\end{theorem}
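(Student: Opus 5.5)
The proof goes by induction over the $p-1$ iterations of Algorithm~\ref{alg:getdag}. The invariant is that at the start of iteration $t$ the remaining variable set equals $V[p-t+1,\tau]$ for some true topological ordering $\tau\in\T_{G^*}$, so the current observed nodes form the induced subgraph $G[p-t+1,\tau]$.

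\textbf{Step 1: entrywise error at every iteration.} At each iteration the current sample covariance is a principal submatrix $\widehat{\bSigma}_{I,I}$ with true counterpart $\bSigma^*_{I,I}$. Claim~\ref{claim:Snorms} shows the regularization conditions on $\lambda$ and $\mu$ still hold for this submatrix. Assumption~\ref{assum:kappamin} gives RFE and SFI with the same $\kappa_{\min}^*$ for the induced subgraph. Theorem~\ref{thm:SLdeterministic} then yields $\|\widehat{\S}-\S[m,\tau]\|_\infty\le m$ at every iteration.

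One subtlety must be checked here. The marginal precision of the remaining nodes $V[m,\tau]$ again has the form $(\I-\B_{V,V})^\top \N_V^{-1}(\I-\B_{V,V})+\L'$, where $\L'$ has rank at most $r$. This holds because removing terminal observed nodes leaves a linear SEM on the remaining nodes with the same $\B$ restricted to them and the same latent confounders. So $\S[m,\tau]$ is exactly the confounder-free matrix referred to in Assumption~\ref{assum:identfinite}.

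\textbf{Step 2: correct terminal selection.} Take $i$ terminal and $j$ non-terminal in $G[m,\tau]$. Direct computation gives $\S_{i,i}=1/\sigma_i^2$ and $\S_{j,j}=1/\sigma_j^2+\sum_{l\in\phi(j)}\B_{l,j}^2/\sigma_l^2$. Part (i) of Assumption~\ref{assum:identfinite} gives $\S_{j,j}-\S_{i,i}>2m$. Therefore $\widehat{\S}_{i,i}\le \S_{i,i}+m<\S_{j,j}-m\le\widehat{\S}_{j,j}$, so the argmin is a true terminal node. Removing it preserves the invariant, since a terminal node of $G[m,\tau]$ can be placed last in some topological order (adjusting $\tau$ if needed).

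\textbf{Step 3: weight error.} For the selected terminal node $i$, Proposition~\ref{prop:ghoshal} gives $\B_{i,j}=-\S_{i,j}/\S_{i,i}$, so
\[
\widehat{\B}_{i,j}-\B_{i,j}=-\frac{\widehat{\S}_{i,j}-\S_{i,j}}{\widehat{\S}_{i,i}}+\frac{\S_{i,j}}{\S_{i,i}}\cdot\frac{\widehat{\S}_{i,i}-\S_{i,i}}{\widehat{\S}_{i,i}}.
\]
Using $\widehat{\S}_{i,i}\ge 1/\sigma_i^2-m>0$, we get $1/\widehat{\S}_{i,i}\le\sigma_i^2/(1-m\sigma_i^2)\le c$. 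Hence $|\widehat{\B}_{i,j}-\B_{i,j}|\le c\,m(1+|\B_{i,j}|)\le c(1+\|\B^*\|_\infty)m$.

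Row $i$ of $\B^*$ is supported only on the remaining nodes, because its parents precede it. Entries for nodes removed earlier are therefore zero in both $\widehat{\B}$ and $\B^*$.

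\textbf{Step 4: support recovery.} This is the step I expect to be delicate. The algorithm as written does not threshold, so support equality needs $\widehat{\S}_{i,j}=0$ exactly whenever $\S_{i,j}=0$. Part (ii) of Assumption~\ref{assum:identfinite} handles one direction: if $\S_{i,j}\ne0$ then $|\widehat{\S}_{i,j}|>|\S_{i,j}|-m>0$.

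For the other direction, I would try to argue via the decomposable-regularizer analysis that the estimate's error stays in $\M(E)$, so the solution's off-support entries vanish. If that argument fails, I would interpret the statement as recovery after thresholding $\widehat{\S}$ at level $m$. Part (ii) makes this threshold exact: true nonzeros satisfy $|\widehat{\S}_{i,j}|>m$, while true zeros satisfy $|\widehat{\S}_{i,j}|\le m$. Since $\supp(\B_{i,\bullet})=\supp(\S_{i,\bullet})\setminus\{i\}$ for a terminal node $i$, support recovery of $\S$ row by row gives $\supp(\widehat{\B})=\supp(\B^*)$.
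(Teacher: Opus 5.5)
Your Steps 1--3 follow the paper's proof. Claim~\ref{claim:Snorms} plus the uniform RFE/SFI assumption give the entrywise bound $m$ at every iteration. Part (i) of Assumption~\ref{assum:identfinite} gives the $2m$ diagonal gap that makes the argmin a terminal node. The ratio perturbation with $1/\widehat{\S}_{i,i}\le\sigma_i^2/(1-m\sigma_i^2)\le c$ gives the weight bound. You are more careful than the paper in two places: you compare against the sub-SEM's matrix $\S[m,\tau]$ rather than $\S^*$, and you re-choose $\tau$ so the removed node comes last. One small justification is missing. The positivity $1/\sigma_i^2>m$ used in Step 3 comes from part (ii) applied to the nonzero diagonal entry $(\S[m,\tau])_{i,i}=1/\sigma_i^2$.

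The genuine gap is Step 4, and you have located it correctly. Part (ii) together with $\|\widehat{\S}-\S[m,\tau]\|_\infty\le m$ only gives $|\widehat{\S}_{i,j}|\ge|\S_{i,j}|-m>0$ on the true support, so no edge is missed. Nothing excludes $0<|\widehat{\S}_{i,j}|\le m$ at true zeros, so spurious edges are not ruled out. The paper's proof has the same hole: it simply asserts that (ii) yields correct support.

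Neither of your repairs closes it. The decomposable-regularizer analysis only places the error in the cone $\cC(E)$, which controls $\|\bDelta_{E^c}\|_1$ by a multiple of $\|\bDelta_E\|_1$, and bounds its Frobenius norm. It never forces $\bDelta_{E^c}=\zero$. Exact zeros off the support would need a primal--dual witness argument under an irrepresentability-type condition, which is not among the assumptions.

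The thresholding fallback is miscalculated. On the support, $\min|\S_{i,j}|>m$ gives only $|\widehat{\S}_{i,j}|>0$, not $|\widehat{\S}_{i,j}|>m$; for example $\S_{i,j}=1.5m$ with $\widehat{\S}_{i,j}=0.5m$ is allowed. True zeros can have $|\widehat{\S}_{i,j}|$ anywhere in $[0,m]$, so no threshold separates the two groups. Thresholding at $m$ does work if two changes are made. First, strengthen (ii) to $\min|\S_{i,j}|>2m=\frac{12}{\kappa_{\cL}}\max\{\lambda\sqrt{s},\mu\sqrt{r}\}$, matching part (i). Second, add the thresholding step to Algorithm~\ref{alg:getdag}. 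Without such a modification, $\supp(\widehat{\B})=\supp(\B^*)$ is not established by your argument, nor by the paper's.
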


Theorem~\ref{thm:Bdeterministic} provides a deterministic statement that does not consider the fact that the training data is random, which implies that the sample covariance matrix $\widehat{\bSigma}$ is also random.
In what follows, we address this.

The following theorem shows that if the number of samples $n$ fulfills $n \gtrsim \frac{1}{\epsilon^2}\max\left\{s\log p,\ r p\right\}$, then Algorithm~\ref{alg:getdag} successfully recovers the weights in $\B$ as well as the edges in the graph $G$.

\begin{theorem}[\textbf{High-probability recovery guarantee for $\B$}] \label{thm:B}
Suppose Assumption~\ref{assum:kappamin} and Assumption~\ref{assum:identfinite} hold.
Let $(G^{*}, \B^{*},\,\{{\sigma^*_i}^{2}\})$ be the true SEM and let $\S^*=(\I-\B^*)^{\top}{\N^*_X}^{-1}(\I-\B^*)$ be the related precision matrix, where $\N^*_X = \Diag\!\left(\{{\sigma^*_i}^2\}_{i=1}^p\right)$.
Choose some constants
$C_1\ge 3/2$ and $C_{2}\ge 1$.
Assume the number of samples $n$ satisfies
\begin{align*}
n &\ge \max\left\{4C_{1}^{2}\log p, \; C_{2}^{2}p, 
c^{2}(1+\|\B^*\|_\infty)^{2}/\epsilon^{2}
\max\{
c_1^{2}s\log p,
\; c_2^{2}rp
\}
\right\},
\end{align*}
Set the regularization parameters as
\begin{equation*}
\lambda = 160 C_{1}\,\bar\sigma^{*}\sqrt{\frac{\log p}{n}}
\qquad\text{and}\qquad
\mu = 16 C_{2}\,\rho^{*}\sqrt{\frac{p}{n}},
\end{equation*}
where $\bar\sigma^{*}=\max_{i}\bSigma^{*}_{i,i}$,
$\rho^{*}=\| \bSigma^{*}\|_{2}$, $s$ is the number of non-zero entries in $\S^*$, $p$ is the number of observed variables, $r$ is the number of latent confounders, $c>0$ is defined as in Theorem~\ref{thm:Bdeterministic}, $c_{1}=\dfrac{960}{\kappa_{L}}\bar\sigma^{*}C_1$,
$c_{2}=\dfrac{96}{\kappa_{L}}\rho^{*}C_2$, $\kappa_{\cL} = \frac{M-2}{2(M-1)}\,\kappa_{\min}^{*}$ and $M\;>\;6$.
Algorithm ~\ref{alg:getdag} returns an estimation
$\widehat{\B}$ such that
\[
\| \widehat{\B}-\B^{*}\|_{\infty} \le \epsilon,
\]
with probability at least $1-4p^{-2(C_{1}-1)}-2\exp\!\bigl(-\tfrac{C_{2}^{2}p}{2}\bigr)$.
Furthermore, Algorithm~\ref{alg:getdag} correctly recovers the edges of the true SEM $(G^{*}, \B^{*},\,\{{\sigma^*_i}^{2}\})$, i.e., $\supp(\widehat{\B})=\supp(\B^*)$.
\end{theorem}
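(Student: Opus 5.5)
The proof combines the high-probability concentration used in Theorem~\ref{thm:SL} with the deterministic guarantee of Theorem~\ref{thm:Bdeterministic}. Define the event
\[
\mathcal{E} := \Bigl\{\lambda \ge 2\|\widehat{\bSigma}-\bSigma^{*}\|_{\infty}\ \text{and}\ \mu \ge 2\|\widehat{\bSigma}-\bSigma^{*}\|_{2}\Bigr\},
\]
with $\lambda = 160 C_{1}\bar\sigma^{*}\sqrt{\log p/n}$ and $\mu = 16 C_{2}\rho^{*}\sqrt{p/n}$. The proof of Theorem~\ref{thm:SL} already shows $\Pr(\mathcal{E}) \ge 1-4p^{-2(C_{1}-1)}-2\exp(-C_{2}^{2}p/2)$ whenever $n \ge \max\{4C_1^2\log p,\ C_2^2 p\}$. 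There, sub-Gaussian/sub-exponential entrywise concentration and a union bound over $p^2$ entries give the $\ell_\infty$ part, and a covering argument for the sample covariance gives the spectral part. I would reuse this step verbatim rather than reprove it.

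On $\mathcal{E}$, Claim~\ref{claim:Snorms} transfers both inequalities to every principal submatrix $\widehat{\bSigma}_{I,I}$. This matters because Algorithm~\ref{alg:getdag} re-solves eq.\eqref{eq:sparselowrankproblem} on the covariance of the remaining variables at every iteration. All iterations therefore run with admissible regularization on the same single event, and no additional union bound over iterations or orderings is needed. Together with Assumptions~\ref{assum:kappamin} and~\ref{assum:identfinite}, Theorem~\ref{thm:Bdeterministic} applies on $\mathcal{E}$. It gives $\supp(\widehat{\B})=\supp(\B^*)$ and
\[
\|\widehat{\B}-\B^*\|_\infty \le c(1+\|\B^*\|_\infty)\, m,\qquad m=\tfrac{6}{\kappa_{\cL}}\max\{\lambda\sqrt{s},\mu\sqrt{r}\}.
\]

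Next I make $m$ explicit. Substituting the chosen $\lambda,\mu$,
\[
m = \max\Bigl\{\tfrac{960 C_1\bar\sigma^*}{\kappa_{\cL}}\sqrt{\tfrac{s\log p}{n}},\ \tfrac{96 C_2\rho^*}{\kappa_{\cL}}\sqrt{\tfrac{rp}{n}}\Bigr\} = \max\Bigl\{c_1\sqrt{\tfrac{s\log p}{n}},\ c_2\sqrt{\tfrac{rp}{n}}\Bigr\},
\]
which matches the constants of Theorem~\ref{thm:SL}. The third sample-size requirement, $n \ge c^2(1+\|\B^*\|_\infty)^2\epsilon^{-2}\max\{c_1^2 s\log p,\ c_2^2 rp\}$, rearranges to $c(1+\|\B^*\|_\infty)\,m \le \epsilon$. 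This yields $\|\widehat{\B}-\B^*\|_\infty\le\epsilon$ on $\mathcal{E}$, and hence with the stated probability.

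The main obstacle is bookkeeping rather than new analysis. First, the constant $c$ depends on $m$ through $c\ge\max_i\sigma_i^2/(1-m\sigma_i^2)$, so it must be well defined, i.e.\ $m\sigma_i^2<1$. I would argue this follows from Assumption~\ref{assum:identfinite}, whose margins force $m$ small relative to $1/\sigma_i^2$; failing that, I would note that $c$ is the constant fixed in Theorem~\ref{thm:Bdeterministic} and treat it as given. Second, Assumption~\ref{assum:identfinite} is stated in terms of $\lambda,\mu$, so it must be read with the specific random-free choices above. Since these choices are deterministic functions of $n$, this causes no circularity.
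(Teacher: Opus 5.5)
Your proposal is correct and follows essentially the same route as the paper. You secure $\lambda \ge 2\|\widehat{\bSigma}-\bSigma^{*}\|_{\infty}$ and $\mu \ge 2\|\widehat{\bSigma}-\bSigma^{*}\|_{2}$ on a single event via the concentration step of Theorem~\ref{thm:SL}, apply Theorem~\ref{thm:Bdeterministic} with $m=\max\{c_1\sqrt{s\log p/n},\,c_2\sqrt{rp/n}\}$, and rearrange $c(1+\|\B^*\|_\infty)\,m\le\epsilon$ into the third sample-size condition; your explicit use of Claim~\ref{claim:Snorms} and your check that $m\sigma_i^2<1$ are bookkeeping the paper leaves implicit, and the latter does follow from Assumption~\ref{assum:identfinite}(ii), since $\S_{i,i}=1/\sigma_i^2>0$ whenever $i$ is terminal in an induced subgraph $G[k,\tau]$ and every node is terminal in some such subgraph.
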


\section{Experimental Validation}

\begin{figure*}
\centering
\begin{tabular}{cc}
\includegraphics[width=0.5\linewidth]{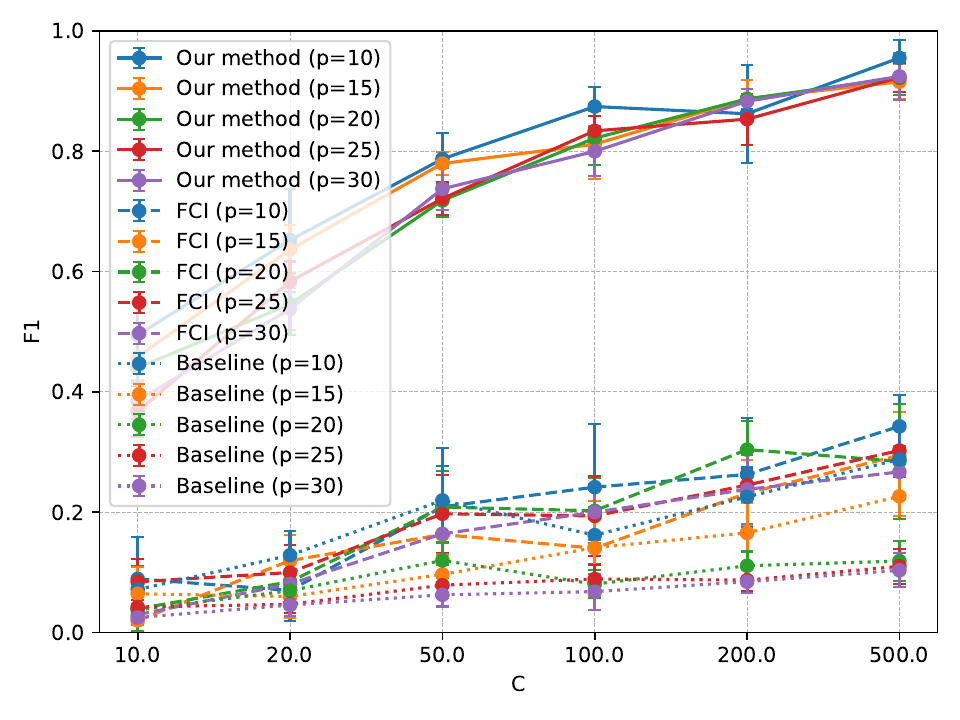} & \includegraphics[width=0.5\linewidth]{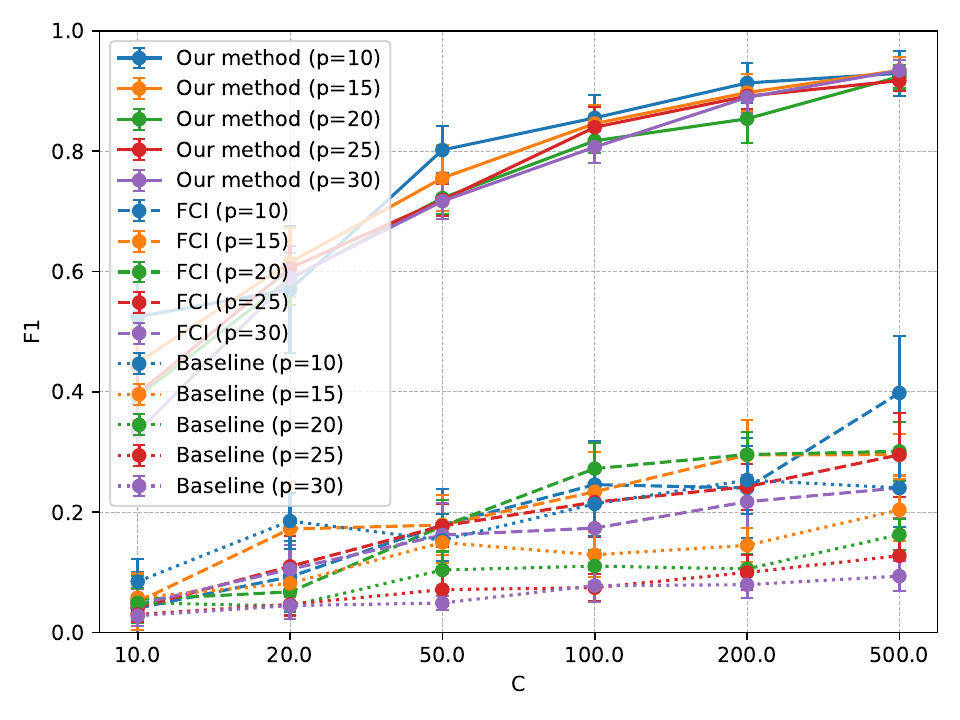} \\
(a) Gaussian distribution & (b) Uniform distribution \\
\includegraphics[width=0.5\linewidth]{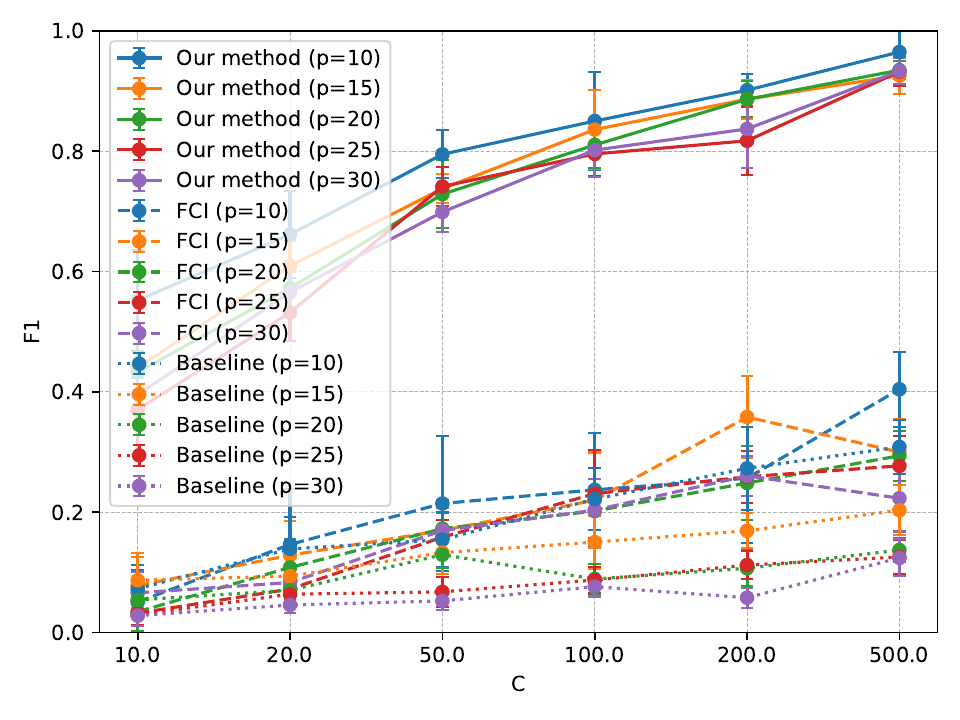} & \includegraphics[width=0.5\linewidth]{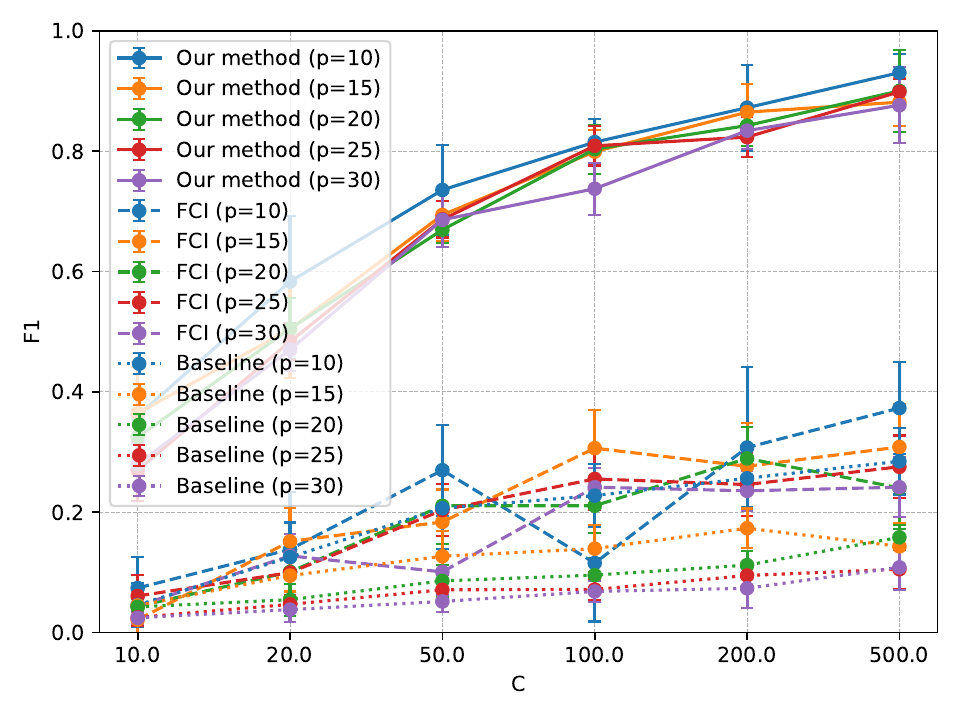} \\
(c) Mixture & (d) Heavy-tailed distribution \\
\end{tabular}
\vspace{-0.1in}
\caption{F1 score (predicted edges versus true edges) for various experimental regimes: (a) Gaussian noise, (b) uniformly-distributed noise, (c) noise following a mixture distribution between a Gaussian and a uniform distribution, and (d) $t$-Student distributed noise.
Our method performs better than the baseline method of~\cite{ghoshal2018learning} for various sample sizes and noise types.
Moreover, when the number of samples is large enough, i.e., $n=C\log p$ for $C=500$, our method obtains a very good F1 score.}
\label{fig:all}
\end{figure*}

In this section, we validate our main theoretical result in Theorem~\ref{thm:Bdeterministic}.
Our experiments consider noise $\noise_X$ and $\noise_Z$ from various distributions, such as Gaussian, uniform, a mixture distribution, as well as heavy-tailed distribution.
We use a mixture distribution of a Gaussian (50\%) and a uniform distribution (50\%).
For the heavy-tailed distribution, we use a $t$-Student distribution with $3$ degrees of freedom.

We consider $r=2$ latent confounders and different number of observed variables $p \in \{10,15,20,25,30\}$.
The topological ordering of the graph is created randomly.
We create an edge set $E$ randomly.
Every observed variable can be affected by at most $2$ observed variables and $1$ latent confounder.
Every latent confounder can be affected by at most $1$ latent confounder.
But a latent confounder can affect several observed variables.
After we decide that edge $(i,j)\in E$, we set $\B_{i,j}=-1$ or $\B_{i,j}=+1$ with equal probability.
Similarly, $\C_{i,j}=-1/4$ or $\C_{i,j}=+1/4$ is set with equal probability.
Finally, $\D_{i,j}=-1/4$ or $\D_{i,j}=+1/4$ is also set with equal probability.

For every node $i$, the noise standard deviation $\sigma_i$ was generated uniformly at random from $[0.08,0.12]$.
We then generate $n$ samples where $n=C\log p$ for $C \in \{10,20,50,100,200,500\}$.

We run our Algorithm~\ref{alg:getdag} with $\lambda=0.01\sqrt{(\log p)/n}$, $\mu=0.005\sqrt{(\log p)/n}$.
We used the method of~\cite{ghoshal2018learning} as a baseline which does not take into account the latent confounders.
We also compared to fast causal inference (FCI)~\cite{spirtes2000causation,colombo_learning_2012} which outputs partial ancestral graphs.
Unfortunately, other methods do not apply for our problem.

We use the F1 score as the metric, computed between the predicted edges and the true edges.
We repeat this produce 10 times and report the average F1 score as well as 95\%-confidence error bars.

Figure~\ref{fig:all} shows that our method performs better than the baseline method of~\cite{ghoshal2018learning} for various sample sizes and noise types.
Moreover, when the number of samples is large enough, i.e., $n=C\log p$ for $C=500$, our method obtains a very good F1 score.

\section{Concluding Remarks}

Our results open several interesting research questions.
Although our model provides a theoretical guarantee for recovering the causal edges between observed variables, it does not pay much attention to the causal edges between latent confounders and the causal edges from latent confounders to observed variables.
Future work could focus on recovery guarantees for those aspects.
In addition, our model is based on a sparse plus low-rank structure.
Another research direction is enable different structural conditions.

\bibliographystyle{plain}
\bibliography{refs}

@inproceedings{meng2014learning,
  title={Learning latent variable {G}aussian graphical models},
  author={Meng, Zhaoshi and Eriksson, Brian and Hero, Al},
  booktitle={International Conference on Machine Learning},
  pages={1269--1277},
  year={2014}
}

@inproceedings{ghoshal2018learning,
  title={Learning linear structural equation models in polynomial time and sample complexity},
  author={Ghoshal, Asish and Honorio, Jean},
  booktitle={Artificial Intelligence and Statistics},
  pages={1466--1475},
  year={2018}
}

@inproceedings{ng2024score,
  title={Score-based causal discovery of latent variable causal models},
  author={Ng, Ignavier and Dong, Xinshuai and Dai, Haoyue and Huang, Biwei and Spirtes, Peter and Zhang, Kun},
  booktitle={International Conference on Machine Learning},
  year={2024}
}

@article{huang2022latent,
  title={Latent hierarchical causal structure discovery with rank constraints},
  author={Huang, Biwei and Low, Charles Jia Han and Xie, Feng and Glymour, Clark and Zhang, Kun},
  journal={Neural Information Processing Systems},
  volume={35},
  pages={5549--5561},
  year={2022}
}

@article{Silva2003-SILLMM,
	author = {Ricardo Silva and Richard Scheines and Clark Glymour and Peter Spirtes},
	title = {Learning Measurement Models for Unobserved Variables},
    journal = {Uncertainty in Artificial Intelligence},
	year = {2003}
}

@inproceedings{chandrasekaran2010latent,
  title={Latent variable graphical model selection via convex optimization},
  author={Chandrasekaran, Venkat and Parrilo, Pablo A and Willsky, Alan S},
  booktitle={Allerton Conference on Communication, Control, and Computing},
  pages={1610--1613},
  year={2010}
}

@article{negahban2012unified,
  title={A unified framework for high-dimensional analysis of {M}-estimators with decomposable regularizers},
  author={Negahban, Sahand N and Ravikumar, Pradeep and Wainwright, Martin J and Yu, Bin},
  journal={Statistical Science},
  year={2012}
}

@article{yang2013dirty,
  title={Dirty statistical models},
  author={Yang, Eunho and Ravikumar, Pradeep K},
  journal={Neural Information Processing Systems},
  volume={26},
  year={2013}
}

@book{spirtes2000causation,
  title={Causation, prediction, and search},
  author={Spirtes, Peter and Glymour, Clark N and Scheines, Richard},
  year={2000},
  publisher={MIT press}
}

@article{ravikumar2011high,
  title={High-dimensional covariance estimation by minimizing $\ell_1$-penalized log-determinant divergence},
  author={Ravikumar, Pradeep and Wainwright, Martin J and Raskutti, Garvesh and Yu, Bin},
  journal={Electronic Journal of Statistics},
  year={2011}
}

@article{bickel2009simultaneous,
  title={Simultaneous analysis of Lasso and Dantzig selector},
  author={Bickel, Peter J and Ritov, Ya’acov and Tsybakov, Alexandre B},
  journal={Annals of Statistics},
  year={2009}
}

@inproceedings{kakade2010learning,
  title={Learning exponential families in high-dimensions: Strong convexity and sparsity},
  author={Kakade, Sham and Shamir, Ohad and Sindharan, Karthik and Tewari, Ambuj},
  booktitle={Artificial Intelligence and Statistics},
  pages={381--388},
  year={2010}
}

@article{colombo_learning_2012,
	title = {Learning high-dimensional directed acyclic graphs with latent and selection variables},
	volume = {40},
	number = {1},
	journal = {The Annals of Statistics},
	author = {Colombo, Diego and Maathuis, Marloes H. and Kalisch, Markus and Richardson, Thomas S.},
	month = feb,
	year = {2012},
}

@article{silva06a,
  author  = {Ricardo Silva and Richard Scheine and Clark Glymour and Peter Spirtes},
  title   = {Learning the Structure of Linear Latent Variable Models},
  journal = {Journal of Machine Learning Research},
  year    = {2006},
  volume  = {7},
  number  = {8},
  pages   = {191--246},
}

@InProceedings{xie22a,
  title = 	 {Identification of Linear Non-{G}aussian Latent Hierarchical Structure},
  author =       {Xie, Feng and Huang, Biwei and Chen, Zhengming and He, Yangbo and Geng, Zhi and Zhang, Kun},
  booktitle = 	 {International Conference on Machine Learning},
  pages = 	 {24370--24387},
  year = 	 {2022},
  volume = 	 {162},
}

@inproceedings{cui2018,
  author       = {Ruifei Cui and
                  Perry Groot and
                  Moritz Schauer and
                  Tom Heskes},
  title        = {Learning the Causal Structure of Copula Models with Latent Variables},
  booktitle    = {Uncertainty in Artificial Intelligence},
  pages        = {188--197},
  year         = {2018},
}

@inproceedings{zheng2018,
 author = {Zheng, Xun and Aragam, Bryon and Ravikumar, Pradeep K and Xing, Eric},
 booktitle = {Neural Information Processing Systems},
 pages = {},
 title = {{DAGs} with {NO} {TEARS}: Continuous Optimization for Structure Learning},
 volume = {31},
 year = {2018}
}

@inproceedings{ng2020,
 author = {Ng, Ignavier and Ghassami, AmirEmad and Zhang, Kun},
 booktitle = {Neural Information Processing Systems},
 pages = {17943--17954},
 title = {On the Role of Sparsity and {DAG} Constraints for Learning Linear {DAGs}},
 volume = {33},
 year = {2020}
}

\clearpage
\appendix

\section{Covariance and Precision Matrix for SEMs with Latent Confounders} \label{app:covarianceprecision}

From eq.\eqref{eq:sem} we have that
\begin{align*}
\left( \I - \begin{bmatrix}
\B & \C \\
\zero & \D
\end{bmatrix} \right) \begin{bmatrix} \x \\ \z \end{bmatrix}
=
\begin{bmatrix} \noise_X \\ \noise_Z \end{bmatrix},
\end{align*}
and thus
\begin{align*}
\begin{bmatrix} \x \\ \z \end{bmatrix}
=
\left( \I - \begin{bmatrix}
\B & \C \\
\zero & \D
\end{bmatrix} \right)^{-1} \begin{bmatrix} \noise_X \\ \noise_Z \end{bmatrix} .
\end{align*}
By eq.\eqref{eq:Sigmafull} we have that
\begin{align*}
\bSigma_{\full} & = \E\left[\begin{bmatrix} \x \\ \z \end{bmatrix} \begin{bmatrix} \x \\ \z \end{bmatrix}^\top\right] \\
& = \left( \I - \begin{bmatrix}
\B & \C \\
\zero & \D
\end{bmatrix} \right)^{-1} \E\left[ \begin{bmatrix} \noise_X \\ \noise_Z \end{bmatrix} \begin{bmatrix} \noise_X \\ \noise_Z \end{bmatrix}^\top \right] \left( \I - \begin{bmatrix}
\B & \C \\
\zero & \D
\end{bmatrix} \right)^{-\top} \\
& = \left( \I - \begin{bmatrix}
\B & \C \\
\zero & \D
\end{bmatrix} \right)^{-1} \begin{bmatrix}
\N_X & 0\\
0 & \N_Z
\end{bmatrix} \left( \I - \begin{bmatrix}
\B & \C \\
\zero & \D
\end{bmatrix} \right)^{-\top} ,
\end{align*}
where $\N_X = \Diag\!\left(\{\sigma_i^2\}_{i=1}^p\right)$ and $\N_Z = \Diag\!\left(\{\sigma_i^2\}_{i=p+1}^{p+r}\right)$.
Now, by eq.\eqref{eq:Jfull} we have
\begin{align*}
\J_\full & = \bSigma_\full^{-1} \\
& = \left(\I -
\begin{bmatrix}
\B & \C\\
\zero & \D
\end{bmatrix}
\right)^{\top}
\begin{bmatrix}
\N_X^{-1} & 0\\
0 & \N_Z^{-1}
\end{bmatrix}
\left(\I -
\begin{bmatrix}
\B & \C\\
\zero & \D
\end{bmatrix}
\right) \\
& = \begin{bmatrix}
\I-\B & -\C\\
\zero & \I-\D
\end{bmatrix}^{\top}
\begin{bmatrix}
\N_X^{-1} & 0\\
0 & \N_Z^{-1}
\end{bmatrix}
\begin{bmatrix}
\I-\B & -\C\\
\zero & \I-\D
\end{bmatrix} \\
& = \begin{bmatrix}
(\I-\B)^{\top} \N_X^{-1} (\I-\B) & -(\I-\B)^{\top} \N_X^{-1} \C \\
- \C^{\top} \N_X^{-1} (\I-\B) & \C^{\top} \N_X^{-1} \C + (\I-\D)^{\top} \N_Z^{-1} (\I-\D)
\end{bmatrix}
\end{align*}
From the above and since $\J_\full =
\left[\begin{matrix} \S & \J_{X,Z} \\ \J_{X,Z}^\top & \J_{Z,Z}  \end{matrix}\right]$, we conclude that $\S = (\I-\B)^{\top} \N_X^{-1}(\I-\B)$.
Finally, we use eq.(1) in~\cite{meng2014learning} to obtain the expression for $\bOmega = \bSigma^{-1} = \S+\L$ where $\L=-\J_{X,Z}\,\J_{Z,Z}^{-1}\,\J_{X,Z}^\top$.

\section{Proofs} \label{app:proofs}

Here we present the proofs for the theorems and lemmas in our main text.

\subsection{Proof of Theorem~\ref{thm:SLdeterministic}}

\begin{proof}
By Assumption~\ref{assum:rfe} and Assumption~\ref{assum:sfi} for $\bOmega^{*}$, by Proposition~\ref{prop:meng} we have that RSC and SI conditions hold.
The proof of Theorem~\ref{thm:SLdeterministic} refers to the proof of Theorem 1 and Corollary 4 in~\cite{yang2013dirty}. They pointed out that under RSC and SI condition, if $\lambda_{\alpha} \ge 2\,\cR_{\alpha}^{\ast}\!\left(\nabla_{\bOmega_{\alpha}} \cL(\bOmega^{\ast}; Z_{1}^{n})\right)
$, then
\begin{equation*}
\sum_{\alpha\in I} \| \widehat{\bDelta}_{\alpha} \|
\;\le\;
\frac{|I|}{\bar{\kappa}}
\left(\frac{3}{2}\,\Phi + \sqrt{\bar{\kappa}\,\tau_{\cL}}\right)
\end{equation*}
where $\Phi = \max_{\alpha\in I}\,\lambda_{\alpha}\,\Psi_{\alpha}\!\big(\overline{\M}_{\alpha}\big)$ and $\cR^{\ast}$ is the dual norm of the norm. The dual norm of the entrywise $\ell_{1}$ norm is the entrywise $\ell_\infty$ norm, and the dual norm of the nuclear norm is the spectral norm.
Note that $\nabla \cL(\bOmega;\X) = \widehat{\bSigma} - \bOmega^{-1}= \widehat{\bSigma} - \bSigma^{*}$.
Thus, when we set $\lambda \ge 2\|\widehat{\bSigma}-\bSigma^{*}\|_{\infty}$ and $\mu \ge 2\|\widehat{\bSigma}-\bSigma^{*}\|_{2}$, we obtain an error bound.

Next, we calculate the error bound.
In our particular problem, $I=\{S,L\}$, $\tau_{\cL}=0$, $\bar{\kappa}=\frac{\kappa_{\cL}}{2}$, and thus we can get $\| \widehat{\bDelta}_{S} \| + \| \widehat{\bDelta}_{L} \| \;\le\; \frac{6}{\kappa_{\cL}}\max\Bigl\{ \lambda\Psi_{S}\!\big(\overline{\M}_{S}\big),\mu \Psi_{L}\!\big(\overline{\M}_{L}\big)\Bigl\}$. Next, we get:
\begin{align*}
\Psi_{S}\!\big(\overline{\M}_{S}\big)
&= \sup_{\bDelta \in \overline{\M}(E)\setminus\{0\}}
   \frac{\| \bDelta\|_{1}}{\| \bDelta\|_{F}}
   \le \sqrt{s},\\
\Psi_{L}\!\big(\overline{\M}_{L}\big)
&= \sup_{\bDelta \in \overline{\M}(U)\setminus\{0\}}
   \frac{\| \bDelta\|_{*}}{\| \bDelta\|_{F}}
   \le \sqrt{r}.
\end{align*}

Since we are using the Frobenius norm in RSC and SI, we can get $\| \widehat{\bDelta}_{S}\|_{F}  + \| \widehat{\bDelta}_{L} \|_{F}  \;\le\; \frac{6}{\kappa_{\cL}}\max\Bigl\{ \lambda\sqrt{s},\mu \sqrt{r}\}$.
Since the entrywise $\ell_\infty$ norm is no greater than the Frobenius norm, we get
\begin{equation*}
\| \widehat{\bDelta}_{S}\|_{\infty}  + \| \widehat{\bDelta}_{L} \|_{\infty}  \;\le\; \frac{6}{\kappa_{\cL}}\max\Bigl\{ \lambda\sqrt{s},\mu \sqrt{r}\}.
\end{equation*}
\end{proof}

\subsection{Proof of Theorem~\ref{thm:SL}}

\begin{proof}
Theorem~\ref{thm:SLdeterministic} is a deterministic theorem, and the error bound and the regularization parameters $\lambda$ and $\mu$ depend on the randomness in the data.
By Assumption~\ref{assum:rfe} and Assumption~\ref{assum:sfi} for $\bOmega^{*}$, by Proposition~\ref{prop:meng} we have that RSC and SI conditions hold.
To prove Theorem~\ref{thm:Bdeterministic}, we only need to prove that $\lambda \ge 2\|\widehat{\bSigma}-\bSigma^{*}\|_{\infty}$ and $\mu \ge 2\|\widehat{\bSigma}-\bSigma^{*}\|_{2}$ holds with high probability. 

According to Lemma 5 in~\cite{meng2014learning} or Lemma 1~\cite{ravikumar2011high}, we can get:
\begin{equation*}
P\!\left\{\left\| \widehat{\bSigma}-\bSigma^\ast \right\|_\infty \le \tfrac{1}{2}\lambda \right\}
\;\ge\; 1-4\,p^{-2(C_1-1)} .
\end{equation*}
where $C_1 > 1$ and $n$ fulfills $n \ge 4C_1^{2}\log p$.
According to Lemma 6 in~\cite{meng2014learning} or Lemma 5.4 in~\cite{chandrasekaran2010latent}, we can get:
\begin{equation*}
P\!\left\{\left\| \widehat{\bSigma}-\bSigma^\ast \right\|_{2} \le \tfrac{1}{2}\mu \right\}
\;\ge\; 1 - 2\exp\!\left(-\frac{C_{2}^{\,2} p}{2}\right).
\end{equation*}
where $C_2 \ge 1$ and $n$ should be satisfy $n \ge C_2^{2} p$.
Thus, with probability at least $1-4p^{-2(C_{1}-1)}-2\exp\!\bigl(-\tfrac{C_{2}^{2}p}{2}\bigr)$, $\lambda \ge 2\|\widehat{\bSigma}-\bSigma^{*}\|_{\infty}$ and $\mu \ge 2\|\widehat{\bSigma}-\bSigma^{*}\|_{2}$ are both true.
\end{proof}

\subsection{Proof of Claim~\ref{claim:Snorms}}

\begin{proof}
Let $\bDelta = \widehat{\bSigma}-\bSigma^{*}$ and note that $\Delta$ is symmetric since both $\widehat{\bSigma}$ and $\bSigma^{*}$ are symmetric.
We need to show that $\lambda \ge 2\|\bDelta\|_{\infty}$ and $\mu \ge 2\|\bDelta\|_{2}$ hold for $\bDelta$, then for all principal submatrices $\bDelta_{I,I}$ where $I\subset [p]$, we have $\lambda \ge 2\|\bDelta_{I,I}\|_{\infty}$ and $\mu \ge 2\|\bDelta_{I,I}\|_{2}$.
The claim follows straightforwardly by properties of the entrywise $\ell_\infty$ and spectral norms, for symmetric principal submatrices.
\end{proof}

\subsection{Proof of Theorem~\ref{thm:Bdeterministic}}

\begin{proof}
According to Lemma~\ref{claim:Snorms}, after removing the terminal node, the error bound between the update precision matrix and the true precision matrix is always established, which implies $\| \widehat{\S}-\S^{*}\|_{\infty}
\;\le\;
\frac{6}{\kappa_{\cL}}\,
\max\Bigl\{
\lambda\sqrt{s},
\; \mu\sqrt{r}
\Bigr\}$.
Let $m = \frac{6}{\kappa_{\cL}}\,
\max\Bigl\{
\lambda\sqrt{s},
\; \mu\sqrt{r}
\Bigr\}$, $\varepsilon_{i,j}=\S^{\ast}_{i,j}-\widehat{\S}_{i,j}$, we get $\left|\varepsilon_{i,j}\right| \le m$ for all $i$ and $j$.
For any $i\neq j$,
\begin{align*}
\bigl|\widehat{\B}_{i,j}-\B^{\ast}_{i,j}\bigr|
&=\left|\frac{\widehat{\S}_{i,j}}{\widehat{\S}_{i,i}}
      -\frac{\S^{\ast}_{i,j}}{\S^{\ast}_{i,i}}\right|\\
&=\left|\frac{\S^{\ast}_{i,i}\bigl(\S^{\ast}_{i,j}-\varepsilon_{i,j}\bigr)
            -\bigl(\S^{\ast}_{i,i}-\varepsilon_{i,i}\bigr)\S^{\ast}_{i,j}}
            {\bigl(\S^{\ast}_{i,i}-\varepsilon_{i,i}\bigr)\S^{\ast}_{i,i}}\right|\\
&=\left|\frac{\varepsilon_{i,i}\,\S^{\ast}_{i,j}-\S^{\ast}_{i,i}\,\varepsilon_{i,j}}
            {\bigl(\S^{\ast}_{i,i}-\varepsilon_{i,i}\bigr)\S^{\ast}_{i,i}}\right|\\
&=\left|\frac{\varepsilon_{i,j}-\B^{\ast}_{i,j}\,\varepsilon_{i,i}}
            {\,\S^{\ast}_{i,i}-\varepsilon_{i,i}\,}\right|\\
& =\left|\frac{\varepsilon_{i,j}-\B^{\ast}_{i,j}\,\varepsilon_{i,i}}
            {\,1/\sigma_i^{2}-\varepsilon_{i,i}\,}\right|\\
&\le \frac{m(1+|\B^{\ast}_{i,j}|)}
          {\bigl|1/\sigma_i^{2}-\varepsilon_{i,i}\bigr|}\\
&  \le \frac{m(1+|\B^{\ast}_{i,j}|)}
          {1/\sigma_i^{2}-m}\\
&\;\le\; c\,m\,(1+|\B^{\ast}_{i,j}|) \\
&\;\le\; c\,m\,(1+\|\B^*\|_\infty)
\end{align*}

In the proof, we use $\S_{i,i}^{\ast} = 1/\sigma_i^{2}$ and $\S_{i,j}^{\ast} = -\B_{i,j}^{\ast}/\sigma_i^{2}$. In Assumption~\ref{assum:identfinite}, we use $1/\sigma_i^{2} > m \ge \varepsilon_{i,i}$. Then, 
every time a terminal node is removed, $\bigl|\widehat{\B}_{i,j}-\B^{\ast}_{i,j}\bigr| \le c\,m\,(1+\|\B^*\|_\infty) $ is correct for any $i\neq j$.
Therefore, we can get $\| \widehat{\B}-\B^{\ast}\|_{\infty}
\;\le\; c\,m\,(1+\|\B^*\|_\infty)$.

Regarding the correct topological ordering, by Assumption~\ref{assum:identfinite}, we know that for all terminal nodes $i$ and non-terminal nodes $j$:
\begin{equation*}
\frac{1}{\sigma_i^2}
\;<\;
\frac{1}{\sigma_j^2}
\;+\;
\sum_{l\in \phi_{G[m,\tau]}(j)} \frac{\B_{l,j}^{2}}{\sigma_l^{2}}
\;-\; 
\frac{12}{\kappa_{\cL}}\,
\max\Bigl\{
\lambda\sqrt{s},
\; \mu\sqrt{r}
\Bigr\}.
\end{equation*}
Thus, we can get:
\begin{equation*}
\frac{1}{\sigma_i^2} + \frac{6}{\kappa_{\cL}}\,
\max\Bigl\{
\lambda\sqrt{s},
\; \mu\sqrt{r}
\Bigr\}
\;<\;
\frac{1}{\sigma_j^2}
\;+\;
\sum_{l\in \phi_{G[m,\tau]}(j)} \frac{\B_{l,j}^{2}}{\sigma_l^{2}}
\;-\; 
\frac{6}{\kappa_{\cL}}\,
\max\Bigl\{
\lambda\sqrt{s},
\; \mu\sqrt{r}
\Bigr\}.
\end{equation*}
We know that $\| \widehat{\S}-\S\|_{\infty} \le \frac{6}{\kappa_{\cL}}\,
\max\Bigl\{
\lambda\sqrt{s},
\; \mu\sqrt{r}
\Bigr\}$ and by Assumption~\ref{assum:identfinite}(ii):
\begin{equation*}
\frac{1}{\sigma_i^2} + \frac{6}{\kappa_{\cL}}\max\Bigl\{
\lambda\sqrt{s},
\; \mu\sqrt{r}
\Bigr\} \ge \widehat{\S}_{i,i}
\quad \text{and} \quad
\frac{1}{\sigma_j^2}
\;-\; 
\frac{6}{\kappa_{\cL}}\,
\max\Bigl\{
\lambda\sqrt{s},
\; \mu\sqrt{r}
\Bigr\} \le \widehat{\S}_{j,j}.
\end{equation*}
Therefore, we can get for all terminal nodes $i$ and non-terminal nodes $j$: $\widehat{\S}_{i,i} < \widehat{\S}_{j,j}$.
Thus, we can still find the terminal node by finding the minimum value of the diagonal entries of $\widehat{\S}$.

Regarding the correct edge recovery, by Assumption~\ref{assum:identfinite}(ii), the support recovery of $\S$ is correct and since $\B_{i,\bullet}=-\frac{\B_{i,\bullet}}{\B_{i,i}}$, the support of $\B$ is also correct.
\end{proof}

\subsection{Proof of Theorem~\ref{thm:B}}

\begin{proof}
Theorem~\ref{thm:Bdeterministic} is a deterministic theorem, and the error bound and the regularization parameters depend on the randomness in the data.
To prove Theorem~\ref{thm:B}, we first need to show that $\lambda \ge 2\|\widehat{\bSigma}-\bSigma^{*}\|_{\infty}$ and $\mu \ge 2\|\widehat{\bSigma}-\bSigma^{*}\|_{2}$ satisfy with high probability.
The proof is similar to the proof of Theorem~\ref{thm:SL} and thus, we will not repeat the probability proof here.
Under Theorem~\ref{thm:SL}, the number of samples $n$ satisfies $n \ge \max\{4C_{1}^{2}\log p,\; C_{2}^{2}p\}$.

We can get the error bound is $m =\max\Bigl\{
c_{1}\sqrt{\frac{s\log p}{n}},
\; c_{2}\sqrt{\frac{rp}{n}}
\Bigr\}$. In order to guarantee that $\| \widehat{\B}-\B^{\ast}\|_{\infty}
\;\le\; \epsilon$,

\begin{equation*}
c\,m\,(1+\|\B^*\|_\infty) \;\le\; \epsilon
\end{equation*}
Thus, we can get 
\begin{equation*}
n\;\ge\;c^{2}(1+\|\B^*\|_\infty)^{2}/\epsilon^{2}
\max\{
c_1^{2}s\log p,
\; c_2^{2}rp
\}
\end{equation*}
Combined with the requirements previously needed for $n$, we obtain
\begin{equation*}
n\;\ge\; \max\Bigl\{4C_{1}^{2}\log p,\; C_{2}^{2}p, \;
c^{2}(1+\|\B^*\|_\infty)^{2}/\epsilon^{2}
\max\{
c_1^{2}s\log p,
\; c_2^{2}rp
\}
\Bigr\}
\end{equation*}
and we prove our claim.
\end{proof}

\end{document}